%% file: ijcai26.tex
\documentclass{article}
\usepackage{ijcai26}

\usepackage{times}
\usepackage{soul}
\usepackage{url}
\usepackage[hidelinks]{hyperref}
\usepackage[utf8]{inputenc}
\usepackage[small]{caption}
\usepackage{graphicx}
\usepackage{amsmath}
\usepackage{amsthm}
\usepackage{booktabs}
\usepackage{algorithm}
\usepackage[switch]{lineno}
\usepackage{amssymb}
\usepackage{mathabx}
\usepackage{xcolor}
\usepackage{xspace}
\usepackage{algpseudocode}

\newtheorem{definition}{Definition}

\usepackage{color, colortbl}

\definecolor{Gray}{gray}{0.9}
\definecolor{fgreen}{RGB}{177,207,149}
\definecolor{fred}{RGB}{234,179,138}

\definecolor{firstcolor}{RGB}{20,128,85}
\definecolor{secondcolor}{RGB}{20,104,168}
\definecolor{thirdcolor}{RGB}{236,84,20}

\newcommand{\first}[1]{\textcolor{firstcolor}{\underline{\textbf{#1}}}}
\renewcommand{\second}[1]{\textcolor{secondcolor}{\underline{#1}}}
\renewcommand{\third}[1]{\textcolor{thirdcolor}{\underline{#1}}}

\newcommand{\method}{\textsc{TA-GGAD}\xspace}

\newtheorem{theorem}{Theorem}

\title{TA-GGAD: Testing-time Adaptive Graph Model for Generalist Graph Anomaly Detection}

\author{
Xiong Zhang${{\dagger}}$\and
Hong Peng${{\dagger}}$\and
Changlong Fu \and
Zhenli He \and
Xin	Jin \and
Yun	Yang \and
Cheng Xie$^*$
\affiliations
School of Software and AI, Yunnan University,  Kunming,  China\\
\emails
zhangxiong@stu.ynu.edu.cn, software\_ph@ynu.edu.cn, fuchanglong@stu.ynu.edu.cn, xinjin@ynu.edu.cn, yangyun@ynu.edu.cn, xiecheng@ynu.edu.cn,
}

\begin{document}

\maketitle

\begin{abstract}
A significant number of anomalous nodes in the real world, such as fake news, noncompliant users, malicious transactions, and malicious posts, severely compromises the health of the graph data ecosystem and urgently requires effective identification and processing.
With anomalies that span multiple data domains yet exhibit vast differences in features, cross-domain detection models face severe domain shift issues, which limit their generalizability across all domains.
This study identifies and quantitatively analyzes a specific feature mismatch pattern exhibited by domain shift in graph anomaly detection, which we define as the \emph{Anomaly Disassortativity} issue ($\mathcal{AD}$).
Based on the modeling of the issue $\mathcal{AD}$, we introduce a novel graph foundation model for anomaly detection. 
It achieves cross-domain generalization in different graphs, requiring only a single training phase to perform effectively across diverse domains. 
The experimental findings, based on fourteen diverse real-world graphs, confirm a breakthrough in the model's cross-domain adaptation, achieving a pioneering state-of-the-art (SOTA) level in terms of detection accuracy.
In summary, the proposed theory of $\mathcal{AD}$ provides a novel theoretical perspective and a practical route for future research in generalist graph anomaly detection (GGAD).  The code is available at \url{https://anonymous.4open.science/r/Anonymization-TA-GGAD/}.
\end{abstract}

\section{Introduction}
\label{sec:Introduction}

\begin{figure}
    \centering
    \includegraphics[width=0.98\linewidth]{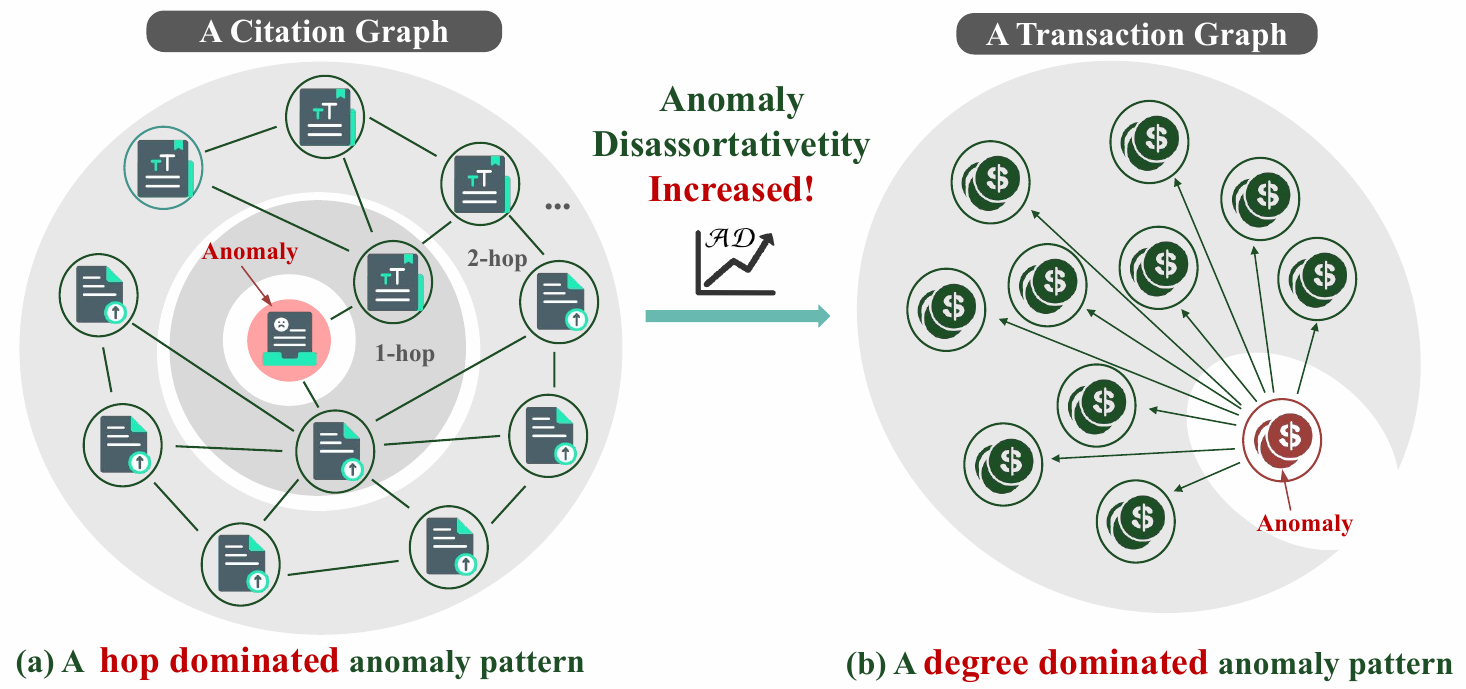}
    \caption{ A case of the Anomaly Disassortativity ($\mathcal{AD}$) issue in Generalist Graph Anomaly Detection. The anomaly pattern (a) is much different from (b).}
    \label{fig: motivation}
\end{figure}

Graph Anomaly Detection (GAD) plays a critical role in identifying suspicious or malicious nodes in complex relational networks, with applications in finance, e-commerce, cybersecurity, and social media~\cite{pourhabibi2020fraud,qiao2024deep,tang2022rethinking,li2022internet,pang2021deep,ma2021comprehensive,xia2021multi}. where anomalies often correspond to high-risk activities such as fraud, fake reviews, or compromised accounts. Recent graph neural network–based methods have achieved notable success~\cite{dominant_ding2019deep,cola_liu2021anomaly,GCTAM_zhang2025gctam,zhou2023robust}, yet most are developed under single-source domain settings and require retraining or fine-tuning when deployed on unseen graphs. This reliance limits their practicality in real-world environments characterized by diverse, dynamic, and continuously evolving networks.


To overcome this limitation, recent studies have introduced Generalized Graph Anomaly Detection (GGAD), which aims to develop a unified model capable of detecting anomalies across diverse graph domains without the need for target-domain retraining or fine-tuning~\cite{arc_liu2024arc,unprompt_niu2024zero,qiao2025anomalygfm}. By enabling zero-shot anomaly detection, GGAD represents a promising step toward scalable and practical solutions for large-scale, real-world graph environments.


ARC~\cite{arc_liu2024arc}, Unprompt~\cite{unprompt_niu2024zero}, and AnomalyGFM~\cite{qiao2025anomalygfm} advance generalizable graph anomaly detection through in-context learning, prompt-based representation unification, and graph-agnostic prototype pre-training, respectively. Despite these efforts, existing methods remain limited in handling fundamental cross-domain discrepancies. We attribute this limitation to a key underlying factor, termed \textit{Anomaly Disassortativity} ($\mathcal{AD}$), which captures the intrinsic variability of anomalous node behaviors across domains. Such inconsistency hampers the learning of transferable anomaly patterns, thereby constraining the effectiveness of current GGAD models in real-world cross-domain settings. Specifically, we identify two core types of disassortativity challenges:

\begin{itemize}
    \item \textbf{Node disassortativity ($\mathcal{ND}$)} refers to discrepancies in node feature distributions or semantics across domains. Nodes in different domains may vary in terms of their feature scale, dimensionality, or semantic meaning. This can result in a model trained in one domain to misinterpret node attributes in another. As illustrated in Figure~\ref{fig: motivation}, graph (a) represents citation data where node features are based on a bag-of-words representation, while graph (b) depicts financial transaction data with node features based on behaviour.

    \item \textbf{Structure disassortativity ($\mathcal{SD}$)} refers to the variability of graph connectivity patterns across domains. Differences in community structures, degree distributions, or overall topologies may cause patterns that appear normal in one domain to be perceived as anomalous in another, as illustrated in Figure~\ref{fig: motivation}. Specifically, (a) shows a \textbf{hop-dominated anomaly pattern} with irregular multi-hop connections, while (b) shows a \textbf{degree-dominated anomaly pattern} where nodes have abnormal degree distributions.
\end{itemize}

\textbf{Anomaly Disassortativity ($\mathcal{AD}$)} causes existing models to misinterpret node attributes or structural cues when transferred to unseen graphs, resulting in severe performance degradation. To address this challenge, we propose \method (Testing-time Adaptive Generalized Graph Anomaly Detection), a unified framework for cross-domain anomaly detection. \method mitigates anomaly disassortativity by jointly modeling node-level and structure-level irregularities through high-order and low-order anomaly scoring modules, producing node-aware and structure-aware anomaly scores, respectively. An anomaly disassortativity-aware adapter then leverages disassortativity indicators ($\mathcal{ND}$, $\mathcal{SD}$) to adaptively balance these scores. Furthermore, a testing-time adapter enables zero-shot adaptation via pseudo-label refinement during inference, allowing domain-specific adjustment without retraining or fine-tuning.

Extensive experiments on diverse graph datasets demonstrate that \method achieves SOTA performance and consistently outperforms existing GGAD baselines. Notably, \method improves AUROC by 15.73\% on the CS dataset compared with ARC~\cite{arc_liu2024arc}. \textbf{Our contributions are summarised as follows}: 




\begin{itemize}
    \item We \textbf{empirically identify and validate} the Anomaly Disassortativity ($\mathcal{AD}$) phenomenon through extensive cross-domain experiments, demonstrating its significant impact on generalized graph anomaly detection. 
    \item We provide a \textbf{formal theoretical definition} of the $\mathcal{AD}$ issue 
    and further design a quantitative metric to rigorously measure its degree across different domains, enabling systematic characterization and analysis.
    \item A novel \emph{test-time adaptive} framework, \textbf{TA-GGAD}, is proposed to dynamically adapt the anomaly detection process to mitigate $\mathcal{AD}$ issue, without requiring retraining or labeled target-domain data. 
\end{itemize}

Together, we identify $\mathcal{AD}$ as a core challenge and present a scalable solution for cross-domain graph anomaly detection.


\section{Related Work}
\subsection{Graph Anomaly Detection}

Graph anomaly detection~\cite{CAGAD_WWW_2025,FGAD_WWW_2025} aims to identify nodes or substructures in a graph that deviate from normal patterns~\cite{bwgnn_tang2022rethinking}. Traditional GAD methods are typically trained on a single dataset and can be categorized by supervision, semi-supervised, and unsupervised methods.

Supervised GAD relies on labeled anomalies or normal nodes, but obtaining reliable anomaly labels is often impractical~\cite{bgnn_ivanov2021boost,bwgnn_tang2022rethinking,ghrn_gao2023addressing,gcn_kipf2017semi,gat_velivckovic2018graph,CAGAD_xiao2024counterfactual,dong2025spacegnn}. Semi-supervised GAD assumes partial normal labels, exemplified by S-GAD~\cite{semi_qiao2024generative}, which synthesizes artificial outliers for one-class training. Unsupervised GAD operates without labels, typically leveraging autoencoding or one-class objectives, such as DOMINANT~\cite{dominant_ding2019deep}, CoLA~\cite{cola_liu2021anomaly}, and TAM-based methods~\cite{tam_qiao2024truncated,GCTAM_zhang2025gctam}. While effective on individual graphs, these approaches generally require retraining for each new graph, limiting cross-domain generalization.

\subsection{Generalist Graph Anomaly Detection}
To address the one-model-per-dataset limitation, recent studies have explored generalist graph anomaly detection. Early cross-domain GAD methods~\cite{ding2021cross,wang2023cross} rely on strong correlations between source and target graphs, which restricts their applicability. 

ARC~\cite{arc_liu2024arc} proposes a one-for-all GAD framework based on in-context learning, enabling adaptation to new graphs via feature alignment and few-shot normal references. UNPrompt~\cite{unprompt_niu2024zero} introduces a zero-shot generalist model that normalizes node attributes and leverages learned neighborhood prompts to produce a unified anomaly score across graphs. AnomalyGFM~\cite{qiao2025anomalygfm} further develops a GAD-oriented graph foundation model by aligning graph-agnostic normal and abnormal prototypes, supporting both zero-shot inference and few-shot prompt tuning.

Despite these advances, existing generalist methods remain limited. ARC requires target-domain samples at inference and cannot fully resolve domain shifts, UNPrompt assumes consistent attribute semantics and a single-source setting, and AnomalyGFM’s residual-based prototype alignment oversimplifies cross-domain anomaly semantics.


\section{Problem Statement}
\label{sec: problem}

\textbf{Preliminaries and Task Formulation.} 
An attributed graph is defined as $\mathcal{G} = (\mathcal{V}, \mathbf{A}, \mathbf{X})$, where $\mathcal{V}=\{v_1,\ldots,v_N\}$ denotes the node set, $\mathbf{X}\in\mathbb{R}^{N\times d}$ is the node feature matrix with $\mathbf{x}_i\in\mathbb{R}^d$ representing the feature vector of node $v_i$, and $\mathbf{A}\in\{0,1\}^{N\times N}$ is the adjacency matrix indicating graph connectivity. In the generalized graph anomaly detection (GGAD) setting, the objective is to learn a single anomaly detector from a collection of labeled source-domain graphs $\mathcal{T}_{\text{train}}=\{\mathcal{D}^{(1)}_{\text{train}},\ldots,\mathcal{D}^{(n_s)}_{\text{train}}\}$ and directly apply it to a disjoint set of target-domain graphs $\mathcal{T}_{\text{test}}=\{\mathcal{D}^{(1)}_{\text{test}},\ldots,\mathcal{D}^{(n_t)}_{\text{test}}\}$ without access to target-domain labels or retraining. Each dataset $\mathcal{D}^{(i)}=(\mathcal{G}^{(i)},\mathbf{y}^{(i)})$ corresponds to a graph from an arbitrary domain. Unlike conventional GAD methods that train graph-specific detectors, GGAD aims to construct a unified model capable of detecting anomalies across various graph domains with diverse semantics and structures.

\textbf{$\mathcal{AD}$ Problem Definition.}
The core challenge of GGAD arises from \emph{Anomaly Disassortativity} ($\mathcal{AD}$), which captures domain-level inconsistencies of anomalous patterns between source and target graphs. We formalize $\mathcal{AD}$ to quantify cross-domain discrepancies at both node and structural levels, reflecting their joint misalignment.

\begin{definition}
 Empirically, for source domain $S$  and target domain $T$, their $\mathcal{AD}$ can be quantified as a weighted combination of two components: \textbf{Node Disassortativity} ($\mathcal{ND}$) and \textbf{Structure Disassortativity} ($\mathcal{SD}$), which account for node-level and structure-level misalignments, respectively. Formally, we have 
\end{definition}

\begin{equation}
\label{eq: AD}
\mathop{\mathcal{AD}}\limits_{S \rightarrow T}=  \left|\mathop{\mathcal{ND}}\limits_{S \rightarrow T} -\mathop{\mathcal{SD}}\limits_{S \rightarrow T}\right|^{ \left(1+ \frac{\mathop{\mathcal{ND}}\limits_{S \rightarrow T}+\mathop{\mathcal{SD}}\limits_{S \rightarrow T}}{2}\right)} 
\end{equation}

To comprehensively characterize and quantify the impact of two parts, we propose two formal mathematical formulations that allow us to rigorously measure the degree of $\mathcal{ND}$ and $\mathcal{SD}$.
Supposing that we have node anomaly score sets from the source domain $S= \{s_1^S, \dots, s_{N_S}^S\}$ and the target domain $T = \{s_1^T, \dots, s_{N_T}^T\}$, for any node $v_i$ with anomaly score $x$, its probability density functions are estimated using Gaussian Kernel Density (KDE) estimation as

\begin{equation}
\label{eq:kde_source}
{f}^{\text{S}}(x) = \frac{1}{N_S h \sqrt{2\pi}} \sum_{i=1}^{N_S} \exp\!\left(-\frac{(x - s_i^S)^2}{2h^2}\right), \quad if \quad x \in S
\end{equation}

\begin{equation}
\label{eq:kde_target}
{f}^{\text{T}}(x) = \frac{1}{N_T h \sqrt{2\pi}} \sum_{j=1}^{N_T} \exp\!\left(-\frac{(x - s_j^T)^2}{2h^2}\right), \quad if \quad x \in T
\end{equation}

where $h$ is the bandwidth parameter controlling the smoothness.
In other words, for any given anomaly score $x$, ${f}^{\text{S}}{(x)}$ and ${f}^{\text{T}}{(x)}$ represent the estimated likelihood of observing this score in the source and target domains, respectively.


\textit{\textbf{(1) Node Disassortativity ($\mathcal{ND}$).}} 
We quantify node-level cross-domain discrepancy using the Jensen--Shannon (JS) distance between KDE-estimated anomaly score distributions. Let $\widehat{f}^{\text{S}}(x)$ and $\widehat{f}^{\text{T}}(x)$ denote the probability density functions of node-level anomaly scores in the source and target domains, respectively. The node disassortativity is defined as

\begin{equation}
\label{eq: ND}
\mathop{\mathcal{ND}}\limits_{S \rightarrow T}
= \sqrt{\tfrac{1}{2} D_{\mathrm{KL}}\!\Big(\widehat{f}^{\text{S}}(x) \,\big\|\, M^{\text{node}} \Big) 
+ \tfrac{1}{2} D_{\mathrm{KL}}\!\Big(\widehat{f}^{\text{T}}(x) \,\big\|\, M^{\text{node}} \Big)},
\end{equation}
where $M^{\text{node}}=\tfrac{1}{2}\big(\widehat{f}^{\text{S}}(x)+\widehat{f}^{\text{T}}(x)\big)$ is the mixture distribution, and $D_{\mathrm{KL}}(\cdot\|\cdot)$ denotes the Kullback--Leibler divergence. By construction, $\mathcal{ND}\in[0,1]$, and larger values indicate stronger distributional mismatch between the source and target domains under the node disassortativity mechanism.


\textit{\textbf{(2) Structure Disassortativity ($\mathcal{SD}$).}} 
We characterize structure-level disassortativity via distributional differences in structure-derived anomaly scores between source and target domains. Let $\widebar{f}^{\text{S}}(x)$ and $\widebar{f}^{\text{T}}(x)$ denote the Gaussian KDE distributions of structure-level anomaly scores for the source and target domains, respectively. The structure disassortativity is defined as

\begin{equation}
\label{eq: SD}
\mathop{\mathcal{SD}}\limits_{S \rightarrow T}
= \sqrt{\tfrac{1}{2} D_{\mathrm{KL}}\!\Big(\widebar{f}^{\text{S}}(x) \,\big\|\, M^{\text{str}} \Big) 
+ \tfrac{1}{2} D_{\mathrm{KL}}\!\Big(\widebar{f}^{\text{T}}(x) \,\big\|\, M^{\text{str}} \Big)},
\end{equation}

where $M^{\text{str}} = \tfrac{1}{2}\big(\widebar{f}^{\text{S}}(x)+\widebar{f}^{\text{T}}(x)\big)$ is the mixture distribution. 
It is easy to notice that $\mathcal{SD}\in[0,1]$, and larger values indicate stronger distributional mismatch between the source and target domains under the structural disassortativity mechanism. 

\textbf{Empirical Evidence.} Quantitative analyses of $\mathcal{ND}$ and $\mathcal{SD}$ are presented in Sections~\ref{sec: analysis_of_AD} and~\ref{sec: analysis ND and SD}.


\begin{figure*}
    \centering
    \includegraphics[width=0.98\linewidth]{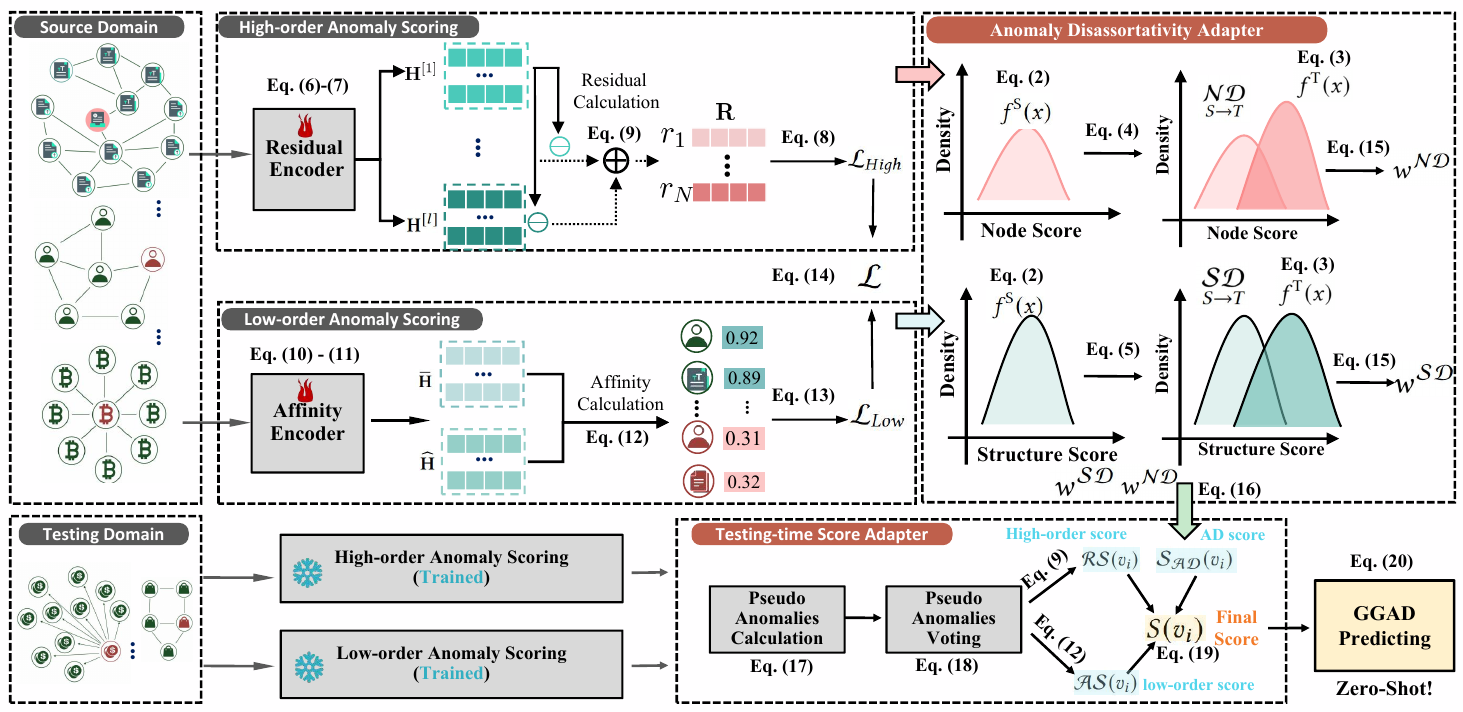}
    \caption{Overview of \method}
    \label{fig: framework}
\end{figure*}

\section{The Proposed Method}
In this section, we introduce \method, a unified framework for addressing cross-domain anomaly disassortativity in graphs. As illustrated in Figure~\ref{fig: framework}, \method comprises four modules: 

\textbf{(1) High-order Anomaly Scoring}: Captures node-level attribute deviations via high-order feature dependencies, producing a \textbf{node-aware score}.
\textbf{(2) Low-order Anomaly Scoring}: Models structure-level irregularities through topological affinity, yielding a \textbf{structure-aware score}.
\textbf{(3) Anomaly Disassortativity-Aware Adapter}: Mitigates cross-domain bias by adaptively weighting scores using $\mathcal{ND}$ and $\mathcal{SD}$.
\textbf{(4) Testing-time Adapter}: Enables zero-shot adaptation via pseudo-label refinement during inference without retraining.

\subsection{High-order Anomaly Scoring}
\label{subsec: 4.1}
Conventional GNN methods rely on shallow message passing, typically aggregating only 1-hop neighborhood information, which may fail to capture high-order patterns essential for anomaly detection. 

\textbf{High-order Residual Representation.}
Learning high-order representations directly for anomalies on graphs will lead to an anomaly-normal mixture problem, as multi-hop propagation reduces the discriminability between the representations of normal and anomalous nodes.
In this paper, we abandon the direct representation of normal and anomalous nodes and instead represent their high-order residuals, which can effectively reduce the confusion in high-order representations between normal and anomalous nodes. 

\begin{equation}
\label{eq: X_prop_trans}
\begin{aligned}
\mathbf{H}^{[1]} &= \sigma(\mathbf{D}^{-\frac12}\tilde{\mathbf{A}}\mathbf{D}^{-\frac12}\mathbf{X} \mathbf{W}^{[1]}),\\ 
\mathbf{H}^{[2]} &= \sigma(\mathbf{D}^{-\frac12}\tilde{\mathbf{A}}\mathbf{D}^{-\frac12}\mathbf{H}^{[1]} \mathbf{W}^{[2]}),\\ 
\cdots\\
\mathbf{H}^{[l]} &= \sigma(\mathbf{D}^{-\frac12}\tilde{\mathbf{A}}\mathbf{D}^{-\frac12}\mathbf{H}^{[l-1]} \mathbf{W}^{[l]}),
\end{aligned}
\end{equation}

where $\mathbf{X}$ denotes node feature matrix, $\tilde{\mathbf{A}}=\mathbf{A}+\mathbf{I}$ is the adjacency matrix with self-loops added, $\mathbf{W}^{(l)}$ and $\sigma(\cdot)$ represent the learnable weight matrix and nonlinear
activation function at the $l$-th layer, respectively.
$\mathbf{H}^{(l)}$ denotes the node representations after $l$-hop propagation.

Then, we calculate the residuals between each hop representation to get the residual representation for the graph, as shown in the equation (\ref{eq: residual_information}):
\begin{equation}
\label{eq: residual_information}
\begin{aligned}
\mathbf{R} &= [(\mathbf{H}^{[2]} - \mathbf{H}^{[1]}) ||  \cdots ||(\mathbf{H}^{[l]} - \mathbf{H}^{[1]})] =            \begin{bmatrix}
\mathbf{r}_1, \cdots,
\mathbf{r}_N
\end{bmatrix}  
\end{aligned}
\end{equation}
where $\mathbf{R} \in \mathbb{R}^{N \times ((l-1)d_h)}$ denotes the residual embedding matrix for all nodes. $\mathbf{r}_i$ is the residual representation for the node $v_i$. This residual representation captures the differences between multi-hop propagated features and the initial node embeddings, emphasizing anomalous patterns while preserving information from normal nodes.

\textbf{Residual contrastive loss.}
We adopt a contrastive learning objective that pulls normal nodes together while pushing them away from anomalous nodes. Specifically, similarities between normal–normal pairs are maximized, whereas a margin-based constraint is imposed on normal–anomalous pairs to enforce separation. The residual similarity loss is defined as

\begin{equation}
\begin{aligned}
     \mathcal{L}_{High} = \sum_t^{N_S^+} \sum_i^{N_S^+}  (1-\frac{\mathbf{r}^+_t \cdot \mathbf{r}^+_i}{||\mathbf{r}^+_t||\cdot||\mathbf{r}^+_i||})  + \\
     \sum_t^{N_S^+}\sum_j^{N_S^-} max(0 , \frac{\mathbf{r}^+_t \cdot \mathbf{r}^-_j}{||\mathbf{r}^+_t||\cdot||\mathbf{r}^-_j||}-\epsilon),
\end{aligned}
\label{eq: loss_high}
\end{equation}

where $\epsilon$ is a margin hyperparameter, and $\mathbf{r}_t^+$ and $\mathbf{r}_j^-$ denote the residual representations of normal and anomalous nodes, respectively.




\textbf{Residual Scoring ($\mathcal{RS}$).}
After training, we evaluate node-level deviations in the target domain using residual embeddings $\mathbf{r}_i$ (Eq.~\eqref{eq: residual_information}), which capture discrepancies induced by node disassortativity. The residual score is defined as the mean squared distance between $\mathbf{r}_i$ and $n_k$ randomly sampled residuals from the target graph:
\begin{equation}
\label{eq: RS_score}
\mathcal{RS}(v_i)=\frac{1}{n_k}\sum_{j=1}^{n_k}\|\mathbf{r}_i-\mathbf{r}_j\|^2,
\end{equation}
where $\mathbf{r}_j$ denotes residual embeddings extracted by the frozen encoder. A higher $\mathcal{RS}$ values indicate stronger node-level anomalies in the target domain.

\subsection{Low-order Anomaly Scoring}
\label{subsec: 4.2}
Although multi-hop propagation enriches high-order contextual representation, it is insufficient for capturing domain-specific distinctions; thus, we retain low-order information to better measure and preserve structural and semantic discrepancies between the source and target domains.

\textbf{Affinity Encoder.}  
To explicitly cope with the structural modeling problem, we introduce a structural graph affinity encoder that learns a homophily‑driven local affinity score for graphs within source and target domains.  
Normal nodes are expected to exhibit high affinity with their neighbours, whereas anomaly nodes break this pattern and thus receive low affinity. 
This contrast provides a structure-aware signal that complements the high-order representations described in Section~\ref{subsec: 4.1}.

Firstly, we map each node to a latent space that captures neighbourhood context information. For computational efficiency and fair comparison with prior work, we employ a single graph convolution network (GCN) layer~\cite{gcn_kipf2017semi}:
\begin{equation}
\begin{aligned}
\label{eq: gcn_structure_information}
    \widebar{\mathbf{H}} = \sigma( {{{\bf{D}}^{ - \frac{1}{2}}}{\bf{ A}}{{\bf{D}}^{ - \frac{1}{2}}}{{{\mathbf{X}}}}{{\bf{W}}}} ) =\{ \mathbf{\bar{h}}_{1}, \cdots, \mathbf{\bar{h}}_{N}\},
\end{aligned}
\end{equation}
where $\sigma(\cdot)$ an activation function and $\mathbf{W}$ is the learnable weight matrix. Secondly, we apply a two-layer MLP to re-aggregate the centrality-related information of each node:
\begin{equation}
\begin{aligned}
\label{eq: mlp_structure_information}
    \widehat{\mathbf{H}} = \sigma(\mathbf{W_2}({{\mathbf{X}}}{{\bf{W_1}}})) =\{ \widehat{\mathbf{h}}_{1}, \cdots, \widehat{\mathbf{h}}_{N}\},
\end{aligned}
\end{equation}
In conclusion, we employ two projections to introduce nonlinearity and a controlled bottleneck, enhancing node-centric signals and yielding more expressive low-order embeddings.

\textbf{Affinity Scoring.}
Lastly, we compute low-order embeddings $\widebar{\mathbf{H}}$ and $\widehat{\mathbf{H}}$ to measure the conformity of node $v_i$ to its neighborhood $\mathcal{N}(v_i)$ via \emph{average cosine similarity}:
\begin{equation}
\label{eq: affinity score}
\mathcal{AS}(v_i)=
\frac{1}{|\mathcal{N}(v_i)|}
\sum_{v_j\in\mathcal{N}(v_i)}
\left(
\frac{\widebar{\mathbf{h}}_i\cdot\widebar{\mathbf{h}}_j}
{\|\widebar{\mathbf{h}}_i\|\|\widebar{\mathbf{h}}_j\|}
+
\frac{\widehat{\mathbf{h}}_i\cdot\widehat{\mathbf{h}}_j}
{\|\widehat{\mathbf{h}}_i\|\|\widehat{\mathbf{h}}_j\|}
\right).
\end{equation}
A higher $\mathcal{AS}(v_i)$ values indicate stronger homophily, while lower values suggest structural anomalies.

\textbf{Affinity Maximization Loss.}
We train the model by maximizing local affinity in an unsupervised manner:
\begin{equation}
\label{eq: loss_low}
\mathcal{L}_{\text{Low}}
= -\sum_{i=1}^{N}\mathcal{AS}(v_i),
\end{equation}
where $\Theta=[\mathbf{W},\mathbf{W}_1,\mathbf{W}_2]$ denotes the learnable parameters.

\textbf{Final Optimization Loss.}
To jointly optimize node-level and structure-level anomaly detection by combining the high-order residual loss $\mathcal{L}_{\text{High}}$ and the low-order affinity loss $\mathcal{L}_{\text{Low}}$:
\begin{equation}
\label{eq: loss_final}
\mathcal{L}=\mathcal{L}_{\text{Low}}+\mathcal{L}_{\text{High}}.
\end{equation}
This unified objective enables the model to capture both high-order anomalies and low-order irregularities, allowing for robust anomaly detection across graphs from heterogeneous domains.

\subsection{Anomaly Disassortativity Adapter}
\label{subsec: 4.3}
To mitigate distributional discrepancies in cross-domain anomaly detection, we propose the \textbf{Anomaly Disassortativity Adapter (ADA)}, which adaptively fuses two anomaly score channels through a weighted integration mechanism.  Specifically, the ADA selector aggregates residual-based and affinity-based anomaly scores to comprehensively capture both high-order and low-order distributional variations across domains.
Instead of explicit divergence estimation, ADA leverages node-level and structure-level disassortativity measures, $\mathcal{ND}$ and $\mathcal{SD}$, to guide
cross-domain weighting. Channels with smaller disassortativity indicate better alignment and are thus assigned larger weights. These weights are computed as follows:
\begin{equation}
\label{eq: adas_weight}
\begin{aligned}
    {w^{\mathcal{ND}}} = \frac{1}{(\mathop{\mathcal{ND}}\limits_{S \rightarrow T} + \epsilon)^{\tau}}, \quad 
    w^{\mathcal{SD}} = \frac{1}{(\mathop{\mathcal{SD}}\limits_{S \rightarrow T} + \epsilon)^{\tau}},
\end{aligned}
\end{equation}
where $\epsilon$ is a small constant for numerical stability, and $\tau$ controls the weighting sharpness, with larger $\tau$ emphasizing better-aligned channels and smaller $\tau$ yielding more uniform weights.

Finally, the fused anomaly score is computed as a weighted aggregation of residual-based and affinity-based anomaly scores:
\begin{equation}
\label{eq: fused_score}
\mathcal{S}_{\mathcal{AD}}(v_i) = \Big(w^{\mathcal{ND}} \cdot \mathcal{RS}(v_i) + w^{\mathcal{SD}} \cdot \Big(\textbf{1}-\mathcal{AS}(v_i)\Big)\Big),
\end{equation}
where $\mathcal{RS}(v_i)$ and $\mathcal{AS}(v_i)$ denote residual-based and affinity-based scores, respectively. This adaptive fusion enhances robustness by emphasizing well-aligned anomaly channels across domains.

\subsection{Testing-time Score Adapter}  
\label{subsec: 4.4}

At this stage, three types of anomaly scores have been obtained: the \textit{residual Score} $\mathcal{RS}(v_i)$ from High-order Anomaly Scoring, the \textit{affinity Score} $\mathcal{AS}(v_i)$ from Low-order Anomaly Scoring, and the \textit{fused score} $\mathcal{S}_{\mathcal{AD}}(v_i)$ from the Anomaly Disassortativity-Aware adapter.  Each of these provides a distinct view of node abnormality across domains.

Let $\mathbf{S}^{(k)}=\{s^{(k)}_1,\ldots,s^{(k)}_N\}$ denote the $k$-th anomaly score vector, with $k\in\{\mathcal{RS},\mathcal{AS},\mathcal{S}_{\mathcal{AD}}\}$ and $N$ is the number of nodes.
Each vector is sorted in descending order, and the top-$M$ entries are assigned as
pseudo-anomalies:
\begin{equation}
\label{eq: pseudo_label}
\widehat{y}^{(k)}_i=
\begin{cases}
1, & \text{if } i\in \text{Top-}M(\mathbf{S}^{(k)}),\\
0, & \text{otherwise},
\end{cases}
\end{equation}
where $\widehat{y}^{(k)}_i \in \{0,1\}$ and $\text{Top-}M(\cdot)$ returns the index set of the top-$M$ anomaly scores. Here, $M$ is determined by the anomaly ratio of the dataset.

To mitigate noise from individual score sources, we employ a majority-voting strategy to integrate pseudo-labels:  
\begin{equation}
\label{eq: voting}
\widehat{y}_i=
\begin{cases}
1, & \sum_{k=1}^{K}\widehat{y}^{(k)}_i \ge K,\\
0, & \text{otherwise},
\end{cases}
\end{equation}
where $K$ denotes the number of voting score threshold, and $\widehat{y}_i$ is the final pseudo-label for node $v_i$. In practice, three score sources are used, and their consensus integrates node-level, structure-level, and domain-level evidence to provide robust pseudo-supervision for testing-time adaptation.

Finally, we introduce an adaptive weighting scheme to account for varying score reliability. Let $\mathbf{S}=[\mathbf{s}^{(1)},\ldots,\mathbf{s}^{(K)}]\in\mathbb{R}^{N\times K}$ denote the
anomaly score matrix. The final testing-time score is
\begin{equation}
\label{eq: adaptive_score}
S(v_i)=\sum_{k=1}^{K} w_k\, s_i^{(k)}, \quad
\text{s.t. } w_k\ge0,\ \sum_{k=1}^{K}w_k=1,
\end{equation}
where $\mathbf{w}$ are learnable reliability weights. They are optimized on pseudo-labeled nodes by
\begin{equation}
\label{eq: adaptive loss}
\min_{\mathbf{w}}\frac{1}{|\mathcal{I}|}\sum_{i\in\mathcal{I}}
\ell\!\left(S(v_i),\widehat{y}_i\right),
\end{equation}
with $\mathcal{I}$ being the set of pseudo-labeled nodes and $\ell(\cdot,\cdot)$ a weakly supervised loss (e.g., cross-entropy).  

Through pseudo-label voting and adaptive weighting, the Testing-time Adapter dynamically emphasizes informative anomaly cues, including $\mathcal{RS}$, $\mathcal{AS}$, and $\mathcal{S}_{\mathcal{AD}}$, enabling \textbf{zero-shot adaptation} to unseen target domains. Detailed algorithmic details and complexity analysis are provided in Appendix~\ref{appendix: algo_des}.

\begin{table*}[ht!]
\renewcommand{\arraystretch}{1.2}
\setlength{\extrarowheight}{3pt}     
\centering
\caption{Anomaly detection performance in terms of AUROC (in percent, mean±std).  The results ranked \textcolor{firstcolor}{\textbf{\underline{first}}}, \textcolor{secondcolor}{\underline{second}}, and \textcolor{thirdcolor}{\underline{third}}.} 
\label{tab: main_auroc_sorted}
\resizebox{\textwidth}{!}{%
\begin{tabular}{c|ccccccccccccc|c}
\hline
\textbf{Method}  & \textbf{CS} & \textbf{Facebook} & \textbf{ACM} & \textbf{Cora} & \textbf{T-Finance} & \textbf{CiteSeer} & \textbf{BlogCatalog} & \textbf{elliptic} & \textbf{Amazon} & \textbf{DGraph-Fin} & \textbf{photo} & \textbf{Reddit} & \textbf{Weibo} & \textbf{rank} \\ \hline

\rowcolor{Gray} \multicolumn{15}{c}{\textbf{GAD methods}} \\

GCN(2017)        & 67.55±1.11 & 29.51±4.86 & 60.49±9.65 & 59.64±8.30 & 68.05±11.82 & 60.27±8.11 & 56.19±6.39 & 50.67±0.49 & 46.63±3.47 & 40.21±2.01 & 58.96±0.69 & 50.43±4.41 & 76.64±17.69 & 9.61 \\
GAT(2018)        & 53.38±0.39 & 51.88±2.16 & 48.79±2.73 & 50.06±2.65 & 48.56±3.21 & 51.59±3.49 & 50.40±2.80 & 24.80±0.48 & 50.52±17.22 & OOM & 52.34±0.10 & 51.78±4.04 & 53.06±7.48 & 12.69 \\
BGNN(2021)       & 36.06±0.92 & 54.74±25.29 & 44.00±13.69 & 42.45±11.57 & 37.05±1.98 & 42.32±11.82 & 47.67±8.52 & 55.29±4.25 & 52.26±3.31 & OOM & 41.86±0.25 & 50.27±3.84 & 32.75±35.35 & 14.69 \\
BWGNN(2022)      & 61.41±10.21 & 45.84±4.97 & 67.59±0.70 & 54.06±3.27 & \third{70.36±4.16} & 52.61±2.88 & 56.34±1.21 & \second{72.41±4.91} & 55.26±16.95 & 52.43±2.48 & 62.47±4.81 & 48.97±5.74 & 53.38±1.61 & 8.85 \\
GHRN(2023)       & 61.50±0.04 & 44.81±8.06 & 55.65±6.37 & 59.89±6.57 & 61.14±15.68 & 56.04±9.19 & 57.64±3.48 & 57.02±4.65 & 49.48±17.13 & \third{52.75±1.11} & 56.22±4.59 & 46.22±2.33 & 51.87±14.18 & 10.08 \\
DOMINANT(2019)   & \second{82.56±0.03} & 51.01±0.78 & 70.08±2.34 & \third{66.53±1.15} & 59.89±0.27 & 69.47±2.02 & \third{74.25±0.65} & 24.80±0.48 & 48.94±2.69 & OOM & 60.18±0.01 & 50.05±4.92 & \first{92.88±0.32} & 7.69 \\
CoLA(2021)       & 58.61±1.64 & 12.99±11.68 & 66.85±4.43 & 63.29±8.88 & 51.34±1.16 & 62.84±9.52 & 50.04±3.25 & 38.72±8.46 & 47.40±7.97 & OOM & 41.32±2.48 & 52.81±6.69 & 16.27±5.64 & 12.30 \\
HCM-A(2022)      & 58.66±5.60 & 35.44±13.97 & 53.70±4.64 & 54.28±4.73 & OOM & 48.12±6.80 & 55.31±0.57 & OOM & 43.99±0.72 & OOM & 56.44±12.93 & 48.79±2.75 & 65.52±12.58 & 13.62 \\
TAM(2023)        & 70.41±2.17 & 65.88±6.66 & 74.43±1.59 & 62.02±2.39 & 39.42±10.61 & \third{72.27±0.83} & 49.86±0.73 & 22.31±0.79 & 56.06±2.19 & OOM & 60.71±1.31 & 55.43±0.33 & 71.54±0.18 & 7.85 \\
CAGAD(2024)      & 60.27±9.53 & 45.84±4.97 & 39.80±9.91 & 50.11±3.41 & \second{72.73±6.01} & 40.13±5.41 & 49.84±12.37 & \third{70.33±6.01} & 46.06±0.75 & 51.39±1.39 & 60.36±3.81 & 54.57±3.89 & 58.99±3.42 & 11.23 \\
S-GAD(2024)       & 50.78±3.24 & 55.89±8.99 & 37.47±2.68 & 39.44±5.41 & 47.15±2.47 & 38.18±4.21 & 50.70±7.34 & 32.88±8.13 & 53.11±4.92 & OOM & 30.77±7.68 & 55.39±0.44 & 65.73±3.35 & 13.15 \\
CONSISGAD(2024)  & 66.48±6.45 & 32.32±11.35 & 67.26±7.47 & 52.94±3.82 & 53.56±5.85 & 57.56±3.02 & 67.41±5.73 & 48.36±18.31 & \third{76.42±9.98} & 51.76±4.87 & 51.72±8.51 & 52.34±3.56 & 44.38±13.81 & 9.84 \\
SmoothGNN(2025)  & 45.19±10.47 & 52.69±9.60 & 47.06±8.23 & 51.50±5.38 & 31.66±27.47 & 48.04±5.79 & 47.96±6.01 & 61.60±9.52 & 62.25±21.10 & 52.71±7.59 & 44.02±15.27 & 51.31±2.73 & 24.55±15.81 & 12.46 \\
SpaceGNN(2025)   & 43.40±3.01 & 35.40±2.39 & 52.78±0.45 & 43.58±4.14 & OOM & 44.70±3.52 & 51.02±7.46 & 60.41±3.08 & 29.06±3.67 & \second{53.01±1.07} & 43.46±5.13 & 50.51±4.86 & 55.56±13.26 & 13.62 \\
GCTAM(2025)      & 74.12±1.62 & \second{68.61±1.41} & \second{80.14±0.13} & 58.78±2.17 & 40.74±14.37 & 70.31±1.77 & 67.60±0.77 & 24.56±1.37 & 55.74±0.60 & OOM & \third{63.12±0.31} & 59.32±0.73 & 70.61±0.10 & \third{6.61} \\ \hline

\rowcolor{Gray} \multicolumn{15}{c}{\textbf{GGAD methods}} \\

ARC(2024)        & \third{82.46±0.45} & \third{67.56±1.60} & \third{79.88±0.28} & \second{87.45±0.74} & 64.10±0.42 & \second{90.95±0.59} & \second{74.76±0.06} & 26.40±0.73 & \second{80.67±1.81} & 47.46±1.42 & \second{75.42±0.70} & \first{60.04±0.69} & \third{88.85±0.14} & \second{3.92} \\
UNPrompt(2025)  & 71.08±0.68 & 55.27±6.90 & 69.91±1.28 & 54.31±1.50 & 25.93±0.30 & 49.80±3.12 & 68.36±0.40 & 41.76±5.39 & 56.02±11.69 & 52.69±0.74 & 51.91±1.62 & 59.18±1.44 & 45.56±3.75 & 8.85 \\
AnomalyGFM(2025) & 67.86±4.95 & 58.64±7.14 & 60.79±1.48 & 54.17±3.08 & 67.57±3.72 & 54.71±2.30 & 57.77±3.31 & OOM & 60.65±9.07 & OOM & 51.57±0.67 & \second{59.99±1.69} & 69.48±11.11 & 8.61 \\ \hline

\method         & \first{98.19±0.13} & \first{83.39±2.15} & \first{89.04±0.76} & \first{91.07±0.91} & \first{76.29±1.21} & \first{94.12±0.48} & \first{77.91±0.36} & \first{74.77±0.14} & \first{82.67±1.37} & \first{54.47±2.82} & \first{76.10±1.49} & \third{59.69±2.50} & \second{90.58±0.17} & \first{1.23} \\ \hline
$\Delta$         & \textcolor{red}{$\uparrow$15.73} & \textcolor{red}{$\uparrow$14.78} & \textcolor{red}{$\uparrow$8.90} & \textcolor{red}{$\uparrow$3.62} & \textcolor{red}{$\uparrow$3.56} & \textcolor{red}{$\uparrow$3.17} & \textcolor{red}{$\uparrow$3.15} & \textcolor{red}{$\uparrow$2.36} & \textcolor{red}{$\uparrow$2.00} & \textcolor{red}{$\uparrow$1.46} & \textcolor{red}{$\uparrow$0.68} & \textcolor{green}{$\downarrow$0.35} & \textcolor{green}{$\downarrow$2.30} & \\ \hline
\end{tabular}%
}
\end{table*}

\section{Experiments}
\subsection{Experimental Setup}

\textbf{Dataset configuration.}  Following ARC~\cite{arc_liu2024arc}, we create a deliberately shifted source/target split.  
The training set is $\mathcal{T}_{\text{train}}
 =$\{PubMed, Flickr, Questions, YelpChi\}, 
while the zero-shot test set is $\mathcal{T}_{\text{test}}
 =$\{ACM,  Facebook,  Amazon, Cora,  CiteSeer, BlogCatalog, Reddit, Weibo, CS, Photo, Elliptic, T-Finance, DGraph-Fin \}.  
All graphs contain injected or naturally occurring anomalies \cite{tang2024gadbench,cola_liu2021anomaly}, providing a realistic benchmark for generalization. More detailed descriptions of the datasets can be found in the appendix~\ref{appendix: dataDes}.

\textbf{Baselines.} We compare \method\ with sixteen strong competitors: five \emph{supervised} GNNs (GCN \cite{gcn_kipf2017semi}, GAT \cite{gat_velivckovic2018graph}, BGNN \cite{bgnn_ivanov2021boost}, BWGNN \cite{bwgnn_tang2022rethinking}, GHRN \cite{ghrn_gao2023addressing}, CAGAD \cite{CAGAD_xiao2024counterfactual}, CONSISGAD\cite{ICLR2024_CONSISGAD}, SpaceGNN\cite{dong2025spacegnn}), one {semi-supervised} model (S-GAD \cite{semi_qiao2024generative}), six \emph{unsupervised} methods (DOMINANT \cite{dominant_ding2019deep}, CoLA \cite{cola_liu2021anomaly}, HCM-A \cite{huang2022hop}, TAM \cite{tam_qiao2024truncated}, SmoothGNN\cite{dong2025smoothgnn}, GCTAM \cite{GCTAM_zhang2025gctam}), and the three state-of-the-art \emph{generalist} approaches ARC \cite{arc_liu2024arc} and UNPrompt \cite{unprompt_niu2024zero} AnomalyGFM \cite{qiao2025anomalygfm}.

\textbf{Evaluation protocol.} We report AUROC and AUPRC averaged over five random seeds (mean$\pm$std) \cite{tang2024gadbench,pang2021toward}. All methods are trained once on $\mathcal{T}_{\text{train}}$ and evaluated on each target graph $\mathcal{T}_{\text{test}}$ in a \emph{pretrain-only} setting. Feature dimensions are aligned via learnable or random projections, and hyperparameters are selected on the training split and fixed across all targets, with no dataset-specific tuning. Implementation details are provided in Appendix~\ref{appendix: implementation}.

\subsection{Main Results}

Table~\ref{tab: main_auroc_sorted} reports AUROC results on eighteen benchmarks. 
\method ranks first on \textbf{11/13 key datasets}, with notable gains on CS (+15.73\%), Facebook (+14.78\%), and ACM (+8.90\%), validating the benefit of jointly modeling $\mathcal{ND}$ and $\mathcal{SD}$ for zero-shot generalization. 
Consistent improvements are also observed on Amazon, Cora, and CiteSeer. 
An exception is Weibo, where DOMINANT slightly outperforms \method, likely due to strong favoring reconstruction-based detection. Overall, \method maintains a \textbf{mean rank of 1.23} and low standard deviations ($<$ 3\%) across random seeds, highlighting stable and consistently superior performance.  
AUPRC results are provided in Appendix~\ref{tab: main_aurpc}.

\subsection{Ablation Study}
We evaluate three variants to assess key component contributions:
(i) \textbf{w/o ADA\&TSA} (backbone only),
(ii) \textbf{w/ADA} (with Anomaly Disassortativity Adapter),
(iii) \textbf{w/TSA} (with Testing-time Score Adapter).

\begin{table}[h!]
\centering
\caption{The evaluation of anomaly disassortativity Adapter (ADA), Testing-time Score Adapter (TSA), and \method (full).}
\label{tab: ablation}
\resizebox{\linewidth}{!}{%
\begin{tabular}{c|cccc}
\hline
\textbf{Dataset} & \textbf{w/o ADA\&TSA} & \textbf{w/ADA} & \textbf{w/TSA} & \textbf{\method (Full)} \\ \hline
CS          & 71.03 & 68.31 & \underline{89.72} & \textbf{98.19} \\
Facebook    & 77.89 & \underline{83.13} & 80.81 & \textbf{83.39} \\
ACM         & 51.57 & \underline{87.58} & 87.31 & \textbf{89.04} \\
Cora        & 88.23 & \underline{90.53} & 88.31 & \textbf{91.07} \\
T-Finance   & 69.84 & \underline{72.37} & 71.93 & \textbf{75.75} \\
CiteSeer    & 92.41 & 91.02 & \underline{93.53} & \textbf{94.12} \\
BlogCatalog & 49.91 & \underline{76.07} &{53.30} & \textbf{77.91} \\
Elliptic    & 32.24 & 70.47 & \underline{73.84} & \textbf{74.77} \\
Amazon      & 78.67 & \underline{81.37} & 81.23 & \textbf{82.67} \\
DGraph-Fin  & 52.47 & \underline{53.44} & 53.41 & \textbf{54.53} \\
Photo       & 73.21 & \underline{73.51} & 74.00 & \textbf{76.10} \\
Reddit      & 57.32 & 57.32 & \underline{57.86} & \textbf{59.69} \\
Weibo       & 88.13 & \underline{90.10} & 89.41 & \textbf{90.57} \\ \hline
\end{tabular}%
}
\end{table}

As shown in Table~\ref{tab: ablation}, both modules provide complementary gains.
\textbf{ADA} notably improves performance on structure-dominated datasets (e.g., ACM, Facebook, Amazon), validating its effectiveness in modeling disassortative structural patterns.
\textbf{TSA} yields consistent gains on feature-centric datasets (e.g., Cora, CiteSeer, Photo) by adapting scores at test time.
Combining both modules (\textbf{Full}) achieves the best results across all datasets, with prominent gains on ACM (+9.74\%) and CS (+17.29\%),
demonstrating the necessity of jointly modeling structural disassortativity and testing-time adaptability.

\subsection{Parameter Sensitivity Analysis}
The model involves a hyperparameter—the voting threshold $K$ in Eq.~(\ref{eq: voting}), which controls pseudo-label strictness.

\begin{figure}[!htbp]
    \centering
    \includegraphics[width=0.99\linewidth]{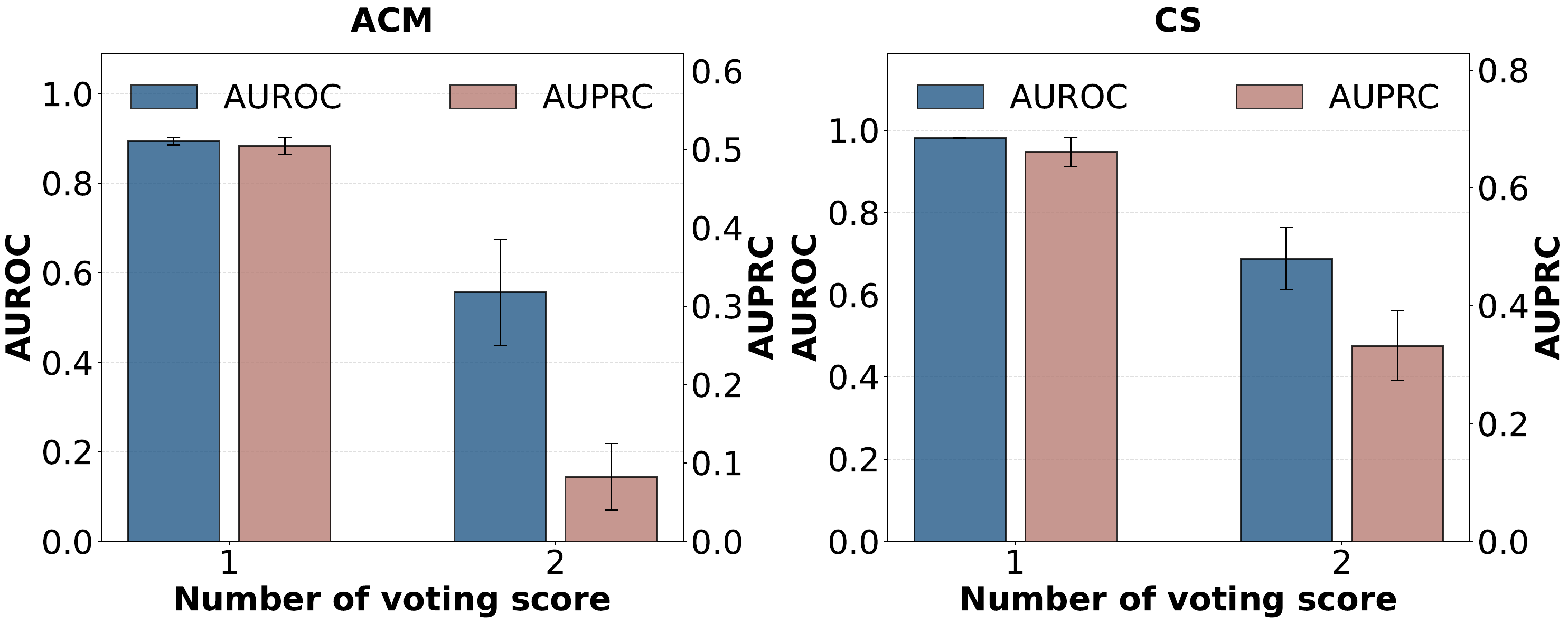}
    \caption{Effect of voting threshold $K$ on AUROC and AUPRC for ACM and CS datasets.}
    \label{fig: parameter_sensitivity}
\end{figure}
As shown in Fig.~\ref{fig: parameter_sensitivity}, a smaller $K$ ($K{=}1$) applies a lenient rule and consistently yields higher AUROC and AUPRC on ACM and CS, whereas a larger $K$ degrades performance by overly restricting pseudo-labeling. Thus, a lower threshold better balances precision and recall for robust testing-time adaptation.

\subsection{Experimental Analysis of \texorpdfstring{$\mathcal{AD}$}{AD}}
\label{sec: analysis_of_AD}
We further analyze the effectiveness of \method under anomaly disassortativity by computing $\mathcal{ND}$, $\mathcal{SD}$, and the normalized adaptability metric $\mathcal{AD^*}$ for each target domain. 
\begin{table}[H]
\centering
\caption{The correlation analysis between $\Delta$ AUROC  and $\mathcal{AD}$ compared with previous SOTA methods. Normalized $\mathcal{AD^*}$ derived from ${\mathcal{AD}}$ via smoothed normalization.}
\label{tab: AD_star_metric}
\resizebox{\linewidth}{!}{%
\begin{tabular}{c|ccc|c}
\hline
\textbf{Target Dataset} & $\mathop{\mathcal{ND}}$ & $\mathop{\mathcal{SD}}$ & $\mathcal{AD^*}$ & $\Delta$ AUROC \\ \hline
Cora        & 0.7928 & 0.3174 & 0.9800 & \textcolor{red}{$\uparrow$3.62}  \\
CiteSeer    & 0.7797 & 0.3313 & 0.9425 & \textcolor{red}{$\uparrow$3.17}  \\
BlogCatalog & 0.3682 & 0.7796 & 0.7548 & \textcolor{red}{$\uparrow$3.15}  \\
Elliptic    & 0.6645 & 0.3179 & 0.7747 & \textcolor{red}{$\uparrow$2.36}  \\
Facebook    & 0.8054 & 0.4653 & 0.7220 & \textcolor{red}{$\uparrow$14.78} \\
CS          & 0.7281 & 0.4635 & 0.6062 & \textcolor{red}{$\uparrow$15.73} \\
Photo       & 0.5849 & 0.8283 & 0.5323 & \textcolor{red}{$\uparrow$0.68}  \\
DGraph-Fin  & 0.4937 & 0.7431 & 0.5632 & \textcolor{red}{$\uparrow$1.46}  \\
ACM         & 0.6224 & 0.3960 & 0.6066 & \textcolor{red}{$\uparrow$8.90}  \\
Amazon      & 0.5853 & 0.8100 & 0.5091 & \textcolor{red}{$\uparrow$2.00}  \\
TFinance    & 0.5432 & 0.7302 & 0.4667 & \textcolor{red}{$\uparrow$3.56}  \\
Weibo       & 0.7269 & 0.6605 & 0.0454 & \textcolor{green}{$\downarrow$2.30} \\
Reddit      & 0.7906 & 0.8323 & 0.0200 & \textcolor{green}{$\downarrow$0.35} \\ \hline
\end{tabular}%
}
\end{table}

As shown in Table~\ref{tab: AD_star_metric}, higher $\mathcal{AD^*}$ corresponds to stronger cross-domain disassortativity and larger AUROC gains (e.g., Cora, CiteSeer). In contrast, low-disassortativity datasets (e.g., Weibo and Reddit) show limited or negative gains, suggesting reduced benefit when cross-domain discrepancies are weak. Overall, the positive correlation between $\mathcal{AD^*}$ and $\Delta$ AUROC validates the effectiveness of \method in adapting to pronounced feature–structure inconsistencies.

\subsection{Impact of \texorpdfstring{$\mathcal{ND}$ and $\mathcal{SD}$}{ND and SD}}
\label{sec: analysis ND and SD}

To further investigate the influence of $\mathcal{AD}$ on source and target domains, we present the $\mathcal{ND}$ and $\mathcal{SD}$ between different target domains with source domains, including PubMed, Flickr, Questions, and YelpChi. 
Specifically, $\mathcal{ND}$ quantifies the discrepancy in node feature distributions or semantics across domains, while $\mathcal{SD}$ measures the variability in graph connectivity patterns.  
Thus, larger values of $\mathcal{ND}$ and $\mathcal{SD}$ indicate greater source–target mismatch (domain shift) and are associated with degraded anomaly detection performance.  For clearer presentation, $\mathcal{SD}$ is highlighted in red, and $\mathcal{ND}$ is shown in blue.    
\begin{table}[H]
\centering
\caption{The AUROC of three models with different node disassortativity ($\mathcal{ND}$) and structure disassortativity ($\mathcal{SD}$) on cross-domain generalization.}
\label{tab: impact_nd_sd}
\resizebox{\linewidth}{!}{%
\begin{tabular}{c|ccccc}
\hline
Source Dataset & \multicolumn{5}{c}{PubMed, Flickr, Questions, YelpChi}                             \\ \hline
Target Dataset & ACM     & BlogCatalog    & Facebook       & CS             & T-Finance      \\ \hline
${\mathcal{ND}}$   & \textcolor{red}{0.6224}         & \textcolor{blue}{0.3682}         & \textcolor{red}{0.8054}         & \textcolor{red}{0.7281}         & \textcolor{blue}{0.5432}         \\
${\mathcal{SD}}$            & \textcolor{blue}{0.3960}         & \textcolor{red}{0.7796}         & \textcolor{blue}{0.4653}         & \textcolor{blue}{0.4635}         & \textcolor{red}{0.7302}         \\ \hline
ARC            & \ul{79.88}    & \ul {74.76}    & \ul{67.56}    & \ul{82.46}    & 64.10          \\
AnomalyGFM     & 60.79          & 57.77          & 58.64          & 67.86          & \ul{67.57}    \\
\method        & \textbf{89.04} & \textbf{77.91} & \textbf{83.39} & \textbf{98.19} & \textbf{76.29} \\ \hline
\end{tabular}%
}
\end{table}
Table~\ref{tab: impact_nd_sd} reports the AUROC of three GGAD models with different node structure disassortativity across domains. \textbf{ARC} performs particularly well on datasets with high $\mathcal{ND}$ values (e.g., Facebook, ACM, CS), suggesting its strong capability in mitigating feature-space discrepancies through node-level alignment. Conversely, \textbf{AnomalyGFM} excels on domains characterized by high $\mathcal{SD}$, such as T-Finance, where robust graph-structural modeling better generalizes across diverse topologies. In summary, \method consistently achieves the best performance across all targets by jointly bridging semantic and structural gaps, enabling robust cross-domain graph anomaly detection under both $\mathcal{ND}$ and $\mathcal{SD}$ disassortativity.

\section{Conclusion}

This study tackles cross-domain graph anomaly detection by identifying the \textbf{Anomaly Disassortativity} issue ($\mathcal{AD}$)—a feature and structure mismatch under domain shift that hinders model generalization.  
We introduce \method, a graph foundation model that achieves robust cross-domain adaptation through a single training phase.  
Experiments on thirteen real-world graphs confirm its superior generalization and state-of-the-art performance.  
The proposed $\mathcal{AD}$ framework offers a new perspective on domain shift in graph anomaly detection and lays a foundation for advancing generalist graph anomaly detection.  
In the future, we will extend this paradigm to graph-level anomaly detection, providing a more comprehensive understanding of anomalies across different domains.

\bibliographystyle{named}
\bibliography{ijcai26}

\appendix
\input{appendix}





\end{document}

%% file: appendix.tex
\appendix
\section{Theoretical Foundations of \texorpdfstring{$\mathcal{ND}$ and $\mathcal{SD}$}{ND and SD}}
To measure the divergence between source and target node distributions with high-order and low-order anomaly score, we adopt the square root of the Jensen–Shannon divergence (JSD), which yields a proper distance metric:
\begin{equation}
\label{eq:ND}
\mathop{\mathcal{ND}}\limits_{S \rightarrow T}
= \sqrt{\tfrac{1}{2} D_{\mathrm{KL}}\!\Big(\widehat{f}^{\text{S}}(x) \,\big\|\, M^{\text{node}} \Big) 
+ \tfrac{1}{2} D_{\mathrm{KL}}\!\Big(\widehat{f}^{\text{T}}(x) \,\big\|\, M^{\text{node}} \Big)},
\end{equation}
\begin{equation}
\label{eq:SD}
\mathop{\mathcal{SD}}\limits_{S \rightarrow T}
= \sqrt{\tfrac{1}{2} D_{\mathrm{KL}}\!\Big(\widebar{f}^{\text{S}}(x) \,\big\|\, M^{\text{str}} \Big) 
+ \tfrac{1}{2} D_{\mathrm{KL}}\!\Big(\widebar{f}^{\text{T}}(x) \,\big\|\, M^{\text{str}} \Big)},
\end{equation}
where $M^{\text{node}}=\frac{1}{2}\Big(\widehat{f}^{\text{S}}(x)+\widehat{f}^{\text{T}}(x)\Big)$ and $M^{\text{str}}=\frac{1}{2}\Big(\widebar{f}^{\text{S}}(x)+\widebar{f}^{\text{T}}(x)\Big)$ are the mixed distribution of the density function.

Since both $\mathcal{ND}$ and $\mathcal{SD}$ are constructed following an identical formulation, as the square root of the Jensen-Shannon divergence between two distributions (source and target) and their shared mixture, therefore we only provide theoretical analysis for the key properties of $\mathcal{ND}$, while $\mathcal{SD}$ follows the same logic analogously. 
\begin{theorem}
(Lower bound)~The anomaly score $\mathcal{ND}$ is lower-bounded by 0.  
\end{theorem}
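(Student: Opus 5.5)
The plan is to show that the radicand in Eq.~\eqref{eq:ND} is nonnegative, so that $\mathcal{ND}$ is a well-defined real number. The bound $\mathcal{ND}\ge 0$ then follows because the principal square root is nonnegative. The whole argument reduces to Gibbs' inequality, i.e.\ $D_{\mathrm{KL}}(p\,\|\,q)\ge 0$ for probability densities $p,q$ with $p$ absolutely continuous with respect to $q$.

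The first step is to check the setting. By Eq.~\eqref{eq:kde_source} and Eq.~\eqref{eq:kde_target}, $\widehat{f}^{\text{S}}$ and $\widehat{f}^{\text{T}}$ are finite averages of Gaussian densities with bandwidth $h>0$. Each is therefore strictly positive on $\mathbb{R}$ and integrates to $1$. The mixture $M^{\text{node}}=\tfrac12(\widehat{f}^{\text{S}}+\widehat{f}^{\text{T}})$ is then also a strictly positive probability density. Moreover, $\widehat{f}^{\text{S}}\le 2M^{\text{node}}$ and $\widehat{f}^{\text{T}}\le 2M^{\text{node}}$ pointwise. This shows absolute continuity, and it also shows that each log-ratio $\log(\widehat{f}^{\text{S}}/M^{\text{node}})$ is bounded above by $\log 2$. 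Consequently both KL terms are well-defined and finite, and each lies in $[0,\log 2]$.

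The second step proves Gibbs' inequality for $p\in\{\widehat{f}^{\text{S}},\widehat{f}^{\text{T}}\}$ and $q=M^{\text{node}}$. Write
\begin{equation*}
-D_{\mathrm{KL}}(p\,\|\,q)=\int p(x)\log\frac{q(x)}{p(x)}\,dx .
\end{equation*}
Apply Jensen's inequality to the concave function $\log$, or equivalently the pointwise bound $\log t\le t-1$. This gives
\begin{equation*}
-D_{\mathrm{KL}}(p\,\|\,q)\le \log\int q(x)\,dx = 0 .
\end{equation*}
Summing the two KL terms with weights $\tfrac12$ therefore yields a nonnegative radicand. Taking the square root gives $\mathcal{ND}_{S\to T}\ge 0$.

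Finally, the bound is attained, so $0$ is the sharp lower bound. Equality in Jensen's inequality holds iff $\widehat{f}^{\text{S}}=M^{\text{node}}=\widehat{f}^{\text{T}}$ almost everywhere. This happens, for instance, when the two score multisets coincide. The same argument applies verbatim to $\mathcal{SD}$.

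The only delicate point is the integrability check in the first step, which justifies applying Jensen's inequality to possibly improper integrals. The pointwise bound $p\le 2q$ makes it routine.
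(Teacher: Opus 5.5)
Your proof is correct and follows the same route as the paper: each KL term against the mixture is nonnegative (Gibbs' inequality), so the Jensen--Shannon radicand is nonnegative and so is its square root. The paper simply cites $D_{\mathrm{KL}}\ge 0$; you additionally prove it via Jensen's inequality, check that the mixture dominates each KDE component so both KL terms are finite, and show the bound is attained when $\widehat{f}^{\text{S}}=\widehat{f}^{\text{T}}$ almost everywhere. That is extra rigor, not a different argument.
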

\begin{proof}
Given any two distributions $P$ and $Q$, their KL divergence is non-negative, that is, $KL(P||Q) \geq 0$ with equality if and only if $P=Q$. Thus, their JS divergence $JS(P||Q)=\frac{1}{2} KL(P||M)+\frac{1}{2} KL(Q||M) \geq 0$ with equality if and only if $P=Q$. Since $\sqrt{JS(P||Q)}\geq 0$, it can be concluded that $\mathcal{ND}$ has the lower bound $0$. 
\end{proof}

\begin{theorem}
(Upper bound)~The anomaly score $\mathcal{ND}$ is upper-bounded by $1$.  
\end{theorem}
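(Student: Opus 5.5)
The plan is to bound each of the two Kullback--Leibler terms in the definition of $\mathcal{ND}$ by $\log 2$ pointwise, and then check that this constant stays below $1$ under either logarithm convention. Write $p=\widehat{f}^{\text{S}}$, $q=\widehat{f}^{\text{T}}$ and $m=M^{\text{node}}=\tfrac12(p+q)$. Because $p$ and $q$ are Gaussian KDEs, both are strictly positive, continuous and integrate to $1$ on $\mathbb{R}$. So $m>0$ everywhere, every ratio below is well defined, and no $0\log 0$ conventions are needed.

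\textbf{Step 1 (pointwise bound).} For every $x$ we have $p(x)\le p(x)+q(x)$, hence
\begin{equation*}
\frac{p(x)}{m(x)}=\frac{2p(x)}{p(x)+q(x)}\le 2 .
\end{equation*}
Since the logarithm is monotone, $p(x)\log\frac{p(x)}{m(x)}\le p(x)\log 2$ for all $x$.

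\textbf{Step 2 (integrate).} Integrating the pointwise bound gives
\begin{equation*}
D_{\mathrm{KL}}(p\,\|\,m)=\int p\log\frac{p}{m}\,dx\le \log 2\int p\,dx=\log 2 .
\end{equation*}
By symmetry, $D_{\mathrm{KL}}(q\,\|\,m)\le\log 2$ as well. Taking the average, $\mathrm{JS}(p\|q)\le\log 2$, and so $\mathcal{ND}=\sqrt{\mathrm{JS}(p\|q)}\le\sqrt{\log 2}$.

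\textbf{Step 3 (the constant).} If the divergences are measured in bits, as is standard when the square-root JSD is normalised to $[0,1]$, then $\log_2 2=1$ and $\mathcal{ND}\le 1$ immediately. If natural logarithms are used, then $\sqrt{\ln 2}\approx 0.833<1$, so the bound holds a fortiori. I would state both cases explicitly.

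\textbf{Main obstacle.} The inequality itself is elementary; the delicate points are bookkeeping. First, Step 2 needs $\int p\log(p/m)$ to be well defined. This holds because the integrand is bounded above by $p\log 2$, and $D_{\mathrm{KL}}\ge 0$ was already used in the lower-bound theorem. Second, the statement depends on the logarithm base, which is exactly what Step 3 settles. If one also wants to record when the bound is attained, equality in Step 1 requires $q=0$ wherever $p>0$. That is impossible for Gaussian KDEs, so in this setting the bound is in fact strict. The same argument applies verbatim to $\mathcal{SD}$ with $\widebar{f}^{\text{S}}$, $\widebar{f}^{\text{T}}$ and $M^{\text{str}}$.
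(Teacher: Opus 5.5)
Your proposal is correct and follows essentially the same route as the paper. The paper bounds $\log\frac{p}{p+q}\le 0$ (equivalently $p/m\le 2$) pointwise, integrates against $p$ to get $D_{\mathrm{KL}}\le\log 2$ for each term, and concludes $\mathcal{ND}\le\sqrt{\log 2}$, which equals $1$ in base $2$; your added observation that $\sqrt{\ln 2}\approx 0.833<1$ makes the bound hold under either logarithm convention, which the paper does not state.
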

\begin{proof}
Given any two distributions $P$ and $Q$ with density fucntions $p(x)$ and $q(x)$, and their mixed distribution $M$ has the density fucntion $m(x)$, we have $m(x)=\frac{1}{2}\Big(p(x)+q(x)\Big)$. Hence, the KL divergence of $P$ and $M$ can be written as
\begin{align*}
KL(P||M)&=\int p(x)log\frac{p(x)}{m(x)}dx\\
&=\int p(x)log\frac{2p(x)}{p(x)+q(x)}dx\\
&=\int p(x)log2dx + \int p(x)log\frac{p(x)}{p(x)+q(x)}dx.
\end{align*}

Since the density function $p(x)$ satisfies $\int p(x)dx=1$, thus the first part $\int p(x)log2dx$ can be simplified as
\begin{align*}
\int p(x)log2dx=log2 \cdot\int p(x)dx=log2 \cdot 1=log2.
\end{align*}

By the concavity of the log function $f(t)=logt$, we have
\begin{align*}
\int p(x)f(t(x))dx \leq f \Big(\int p(x)t(x)dx\Big)
\end{align*}
where $p(x)$ satisfies $\int p(x)dx=1$.
Since the density function is non-negative, we have $p(x)+q(x)\geq p(x)$. It follows that $\frac{p(x)}{p(x)+q(x)}\leq 1$. Then, let $t(x)=\frac{p(x)}{p(x)+q(x)}$, $log(t(x))\leq 0$ when $t(x)\leq 1$. Thus, we conclude that
\begin{align*}
\int p(x)log\frac{p(x)}{p(x)+q(x)}dx &= \int p(x)log(t(x)))dx\\ 
&\leq \int p(x) \cdot 0 dx\\
&\leq 0
\end{align*}

Therefore, we have
\begin{align*}
KL(P||M)&=\int p(x)log2dx + \int p(x)log\frac{p(x)}{p(x)+q(x)}dx\\
&\leq log2+0\\
&\leq log2.
\end{align*}

In conclusion, we have proven that 
\begin{align*}
\sqrt{JS(P||Q)}&=\sqrt{\frac{1}{2}KL(P||M)+\frac{1}{2}KL(Q||M)}\\
&\leq \sqrt{\frac{1}{2} log2 +\frac{1}{2}log2}\\
&\leq \sqrt{log2}.
\end{align*}
If base-2 logarithms are used, the upper bound of $\sqrt{JS(P||Q)}$ is $1$. Besides, the upper bound holds only when the distribution $M$ is a convex combination of $P$ and $Q$, which is satisfied in our model. Thus, $\mathcal{ND}$ is upper-bounded by $1$. 
\end{proof}

\begin{theorem}
(Symmetry)~For any two domains $S$ and $T$, the $\mathcal{ND}(S,T)$ is symmetric.
\end{theorem}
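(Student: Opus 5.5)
The plan is to show that the defining expression for $\mathcal{ND}$ is invariant under swapping the roles of the two domains. Throughout, write $p=\widehat{f}^{\text{S}}$ and $q=\widehat{f}^{\text{T}}$ for the KDE densities of Eqs.~(\ref{eq:kde_source})--(\ref{eq:kde_target}). Then
\begin{equation*}
\mathop{\mathcal{ND}}\limits_{S \rightarrow T}=\sqrt{\tfrac12 D_{\mathrm{KL}}(p\,\|\,M)+\tfrac12 D_{\mathrm{KL}}(q\,\|\,M)}, \qquad M=\tfrac12(p+q).
\end{equation*}
Computing $\mathcal{ND}_{T\rightarrow S}$ means applying the same construction with the two score sets exchanged. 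The bandwidth $h$ is shared and each KDE depends only on its own score set, so the swap simply maps $p\mapsto q$ and $q\mapsto p$, with no other change to the densities.

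\textbf{Step 1: the mixture is unchanged.} The mixture for the swapped pair is $M'=\tfrac12(q+p)$. Because pointwise addition of functions is commutative, $M'=M$.

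\textbf{Step 2: the sum is unchanged.} With $M'=M$, the quantity inside the root for $T\rightarrow S$ is
\begin{equation*}
\tfrac12 D_{\mathrm{KL}}(q\,\|\,M)+\tfrac12 D_{\mathrm{KL}}(p\,\|\,M).
\end{equation*}
This is the same two terms as before in the opposite order, so by commutativity of real addition it equals the $S\rightarrow T$ expression. Both KL terms are finite by the bound proved in the Upper bound theorem, so the rearrangement is legitimate. Taking the (monotone, single-valued) square root then gives $\mathcal{ND}_{S\rightarrow T}=\mathcal{ND}_{T\rightarrow S}$.

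The only real subtlety is checking that the estimation step does not break the symmetry. Two things must hold:
\begin{itemize}
\item the bandwidth $h$ must be common to both domains, or at least chosen per domain without reference to which domain is labelled ``source'';
\item the KL integrals must be taken over the same domain (all of $\mathbb{R}$) in both directions.
\end{itemize}
I will state both as assumptions implicit in the definition. The same argument applies verbatim to $\mathcal{SD}$.

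It is also worth noting that $\mathcal{AD}$ in Eq.~(\ref{eq: AD}) is then symmetric too: it depends on $\mathcal{ND}$ and $\mathcal{SD}$ only through $|\mathcal{ND}-\mathcal{SD}|$ and $\mathcal{ND}+\mathcal{SD}$.
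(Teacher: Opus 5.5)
Your proof is correct and is essentially the paper's argument: the paper likewise notes that swapping $S$ and $T$ only reorders the two KL terms under the square root, with the mixture $M^{\text{node}}$ unchanged, so commutativity of addition gives $\mathcal{ND}_{S\rightarrow T}=\mathcal{ND}_{T\rightarrow S}$. Your side remarks on the shared bandwidth and on the finiteness of the KL terms are harmless but not needed, since addition of nonnegative extended reals is already commutative.
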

\begin{proof}
\begin{align*}
\mathop{\mathcal{ND}}\limits_{S \rightarrow T}&= \sqrt{\frac{1}{2} D_{\mathrm{KL}}\Big(\widehat{f}^{\text{S}}(x) \,\big\|\, M^{\text{node}}\Big) +\frac{1}{2} D_{\mathrm{KL}}\Big(\widehat{f}^{\text{T}}(x) \,\big\|\, M^{\text{node}}\Big)} \\
&=\sqrt{\frac{1}{2} D_{\mathrm{KL}}\Big(\widehat{f}^{\text{T}}(x) \,\big\|\, M^{\text{node}}\Big) +\frac{1}{2} D_{\mathrm{KL}}\Big(\widehat{f}^{\text{S}}(x) \,\big\|\, M^{\text{node}}\Big)} \\
&=\mathop{\mathcal{ND}}\limits_{T \rightarrow S}.
\end{align*}
Thus, we have proven that $\mathcal{ND}$ is symmetric.
\end{proof}

\begin{theorem}
(Non-negative)~For any two domains $S$ and $T$, the $\mathcal{ND}$ is non-negative.
\end{theorem}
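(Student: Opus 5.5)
The plan is to reduce the claim to the non-negativity of the Kullback--Leibler divergence, essentially the argument behind the lower-bound theorem, but made self-contained by proving Gibbs' inequality rather than citing it. I will write $p=\widehat{f}^{\text{S}}$, $q=\widehat{f}^{\text{T}}$ and $m=M^{\text{node}}=\tfrac12(p+q)$. These are Gaussian KDEs, so $p,q$ are strictly positive and integrate to one, and the same holds for $m$. Hence every logarithm appearing in the divergences is well defined, and $m(x)>0$ wherever $p(x)>0$, so both $D_{\mathrm{KL}}(p\,\|\,m)$ and $D_{\mathrm{KL}}(q\,\|\,m)$ are finite.

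First, I will show $D_{\mathrm{KL}}(p\,\|\,m)\ge 0$. Write
\begin{equation*}
-D_{\mathrm{KL}}(p\,\|\,m)=\int p(x)\log\frac{m(x)}{p(x)}\,dx .
\end{equation*}
By Jensen's inequality for the concave function $\log$ with respect to the probability density $p$, this is at most
\begin{equation*}
\log\int p(x)\frac{m(x)}{p(x)}\,dx=\log\int m(x)\,dx=\log 1=0 .
\end{equation*}
The identical argument with $q$ in place of $p$ gives $D_{\mathrm{KL}}(q\,\|\,m)\ge 0$.

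Next, the radicand $\tfrac12 D_{\mathrm{KL}}(p\,\|\,m)+\tfrac12 D_{\mathrm{KL}}(q\,\|\,m)$ is a nonnegative combination of nonnegative numbers, so it is itself nonnegative. The square root in the definition of $\mathcal{ND}$ in Eq.~\eqref{eq: ND} is therefore real-valued, and it is nonnegative by the convention for the principal square root. This proves $\mathcal{ND}_{S\rightarrow T}\ge 0$. By the symmetry theorem the same conclusion holds for $T\rightarrow S$, and the argument carries over verbatim to $\mathcal{SD}$.

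The main obstacle is justifying Jensen's inequality, specifically that $\int p\log(m/p)$ is well defined and that $m/p$ is integrable against $p$. Both follow from the strict positivity and integrability of the Gaussian KDEs. I will also note the equality case: equality in Jensen's inequality forces $m/p$ to be constant $p$-almost everywhere, i.e.\ $p=q$, so $\mathcal{ND}=0$ exactly when the two score distributions coincide. This is consistent with the lower-bound theorem.
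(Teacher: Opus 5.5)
Your proof is correct and takes the same route as the paper. The paper's proof just observes that the JS divergence is non-negative (inherited from KL non-negativity, as in the lower-bound theorem), so its principal square root $\mathcal{ND}$ is non-negative; you make the Gibbs-inequality step self-contained via Jensen. One small tightening: finiteness of $D_{\mathrm{KL}}(p\,\|\,m)$ does not follow merely from $m>0$ wherever $p>0$, but it does follow from $p\le 2m$, which together with your Jensen step gives $0\le D_{\mathrm{KL}}(p\,\|\,m)\le\log 2$.
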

\begin{proof}
Since the JS divergence is non-negative, it can be concluded that $\mathcal{ND}=\sqrt{JS(P||Q)}$ is non-negative.
\end{proof}




\section{Description of Datasets}
\label{appendix: dataDes}

\noindent Following ARC~\cite{arc_liu2024arc}, we evaluate our model on 13 benchmark datasets grouped into four categories:
\textbf{(1)} citation networks with injected anomalies, 
\textbf{(2)} social networks with injected anomalies, 
\textbf{(3)} social networks with real anomalies, and 
\textbf{(4)} co-review and co-purchase networks with real anomalies.  
For each group, the largest dataset is used as the source for training, and the others serve as target domains for testing, ensuring a thorough assessment of \method’s generalization capability.

Table~\ref{tab: dsets_statis} summarizes dataset statistics. The selected datasets span academic, social, financial, and e-commerce domains, covering both synthetic and real anomalies to provide a comprehensive benchmark for cross-domain detection.

\begin{table*}[ht!]
\caption{Statistics of the seventeen datasets used for training and testing.} 
\centering
\label{tab: dsets_statis}
\resizebox{0.85\textwidth}{!}{%
\begin{tabular}{ccccccccc}
\hline
\multicolumn{1}{c|}{Dataset} & Train & \multicolumn{1}{c|}{Test} & \#Nodes & \#Edges & \#Features & Avg. Degree & \#Anomaly & \%Anomaly \\ \hline
\rowcolor{Gray}
\multicolumn{9}{c}{Citation networks with injected anomalies}                                                                                   \\
\multicolumn{1}{c|}{Cora}        & -            & \multicolumn{1}{c|}{$\checkmark$} & 2,708     & 5,429      & 1,433  & 3.90  & 150    & 5.53  \\
\multicolumn{1}{c|}{CiteSeer}    & -            & \multicolumn{1}{c|}{$\checkmark$} & 3,327     & 4,732      & 3,703  & 2.77  & 150    & 4.50  \\
\multicolumn{1}{c|}{ACM}         & -            & \multicolumn{1}{c|}{$\checkmark$} & 16,484    & 71,980     & 8,337  & 8.73  & 597    & 3.62  \\
\multicolumn{1}{c|}{PubMed}      & $\checkmark$ & \multicolumn{1}{c|}{-}            & 19,717    & 44,338     & 500    & 4.50  & 600    & 3.04  \\ \hline
\rowcolor{Gray}
\multicolumn{9}{c}{Social networks with injected anomalies}                                                                                     \\
\multicolumn{1}{c|}{BlogCatalog} & -            & \multicolumn{1}{c|}{$\checkmark$} & 5,196     & 171,743    & 8,189  & 66.11 & 300    & 5.77  \\
\multicolumn{1}{c|}{Flickr}      & $\checkmark$ & \multicolumn{1}{c|}{-}            & 7,575     & 239,738    & 12,047 & 63.30 & 450    & 5.94  \\ \hline
\rowcolor{Gray}
\multicolumn{9}{c}{Social networks with real anomalies}                                                                                         \\
\multicolumn{1}{c|}{Facebook}    & -            & \multicolumn{1}{c|}{$\checkmark$} & 1,081     & 55,104     & 576    & 50.97 & 25     & 2.31  \\
\multicolumn{1}{c|}{Weibo}       & -            & \multicolumn{1}{c|}{$\checkmark$} & 8,405     & 407,963    & 400    & 48.53 & 868    & 10.30 \\
\multicolumn{1}{c|}{Reddit}      & -            & \multicolumn{1}{c|}{$\checkmark$} & 10,984    & 168,016    & 64     & 15.30 & 366    & 3.33  \\
\multicolumn{1}{c|}{Questions}   & $\checkmark$ & \multicolumn{1}{c|}{-}            & 48,921    & 153,540    & 301    & 3.13  & 1,460  & 2.98  \\ \hline
\rowcolor{Gray}
\multicolumn{9}{c}{Co-review and Co-purchase networks with real anomalies}                                                                      \\
\multicolumn{1}{c|}{CS}          & -            & \multicolumn{1}{c|}{$\checkmark$} & 18,333     & 186,019     & 8      & 10.14 & 600  & 3.27 \\
\multicolumn{1}{c|}{Photo}       & -            & \multicolumn{1}{c|}{$\checkmark$} & 7,650     & 248,731    & 745    & 32.51 & 450    & 5.88   \\
\multicolumn{1}{c|}{Amazon}      & -            & \multicolumn{1}{c|}{$\checkmark$} & 10,244    & 175,608    & 25     & 17.18 & 693    & 6.76  \\
\multicolumn{1}{c|}{YelpChi}     & $\checkmark$ & \multicolumn{1}{c|}{-}            & 23,831    & 49,315     & 32     & 2.07  & 1,217  & 5.10  \\ \hline
\rowcolor{Gray}
\multicolumn{9}{c}{Financial networks with real anomalies}                                                                                     \\
\multicolumn{1}{c|}{Elliptic}    & -            & \multicolumn{1}{c|}{$\checkmark$} & 46,564    & 73,248     & 93     & 1.57  & 4,545  & 9.76   \\
\multicolumn{1}{c|}{T-Finance}   & -            & \multicolumn{1}{c|}{$\checkmark$} & 39,357    & 42,445,086  & 10     & 1,078.46 & 1,803  & 4.58   \\
\multicolumn{1}{c|}{DGraph-Fin}  & -            & \multicolumn{1}{c|}{$\checkmark$} & 3,700,550 & 73,105,508 & 17     & 2.16  & 15,509 & 1.30   \\ \hline
\end{tabular}%
}
\end{table*}

\begin{itemize}
    \item \textbf{Cora}, \textbf{CiteSeer}, \textbf{PubMed}~\cite{sen2008collective}, and \textbf{ACM}~\cite{tang2008arnetminer}: citation graphs where nodes denote papers, edges represent citations, and node features are bag-of-words vectors.
    
    \item \textbf{BlogCatalog} and \textbf{Flickr}~\cite{dominant_ding2019deep}: social graphs in which nodes are users and edges denote friendships or follows. Node attributes derive from user-generated text and tags.
    
    \item \textbf{Amazon} and \textbf{YelpChi}~\cite{rayana2015collective,mcauley2013amateurs}: review-based graphs for opinion fraud detection. We adopt the Amazon-UPU and YelpChi-RUR variants, where edges link users or reviews sharing contextual relations.
    
    \item \textbf{Facebook}~\cite{xu2022contrastive}: a social network capturing user friendships and interactions.
    
    \item \textbf{Reddit}~\cite{kumar2019predicting}: a forum network where users are nodes and edges reflect shared discussions; banned users are labeled as anomalies.
    
    \item \textbf{Weibo}~\cite{kumar2019predicting}: a microblog network where suspicious posting behaviors within short intervals indicate anomalies.
    
    \item \textbf{Questions}~\cite{platonov2023critical}: a Q\&A interaction graph from Yandex Q, with user embeddings derived from profile text.
    
    \item \textbf{Elliptic}~\cite{ijcai2022p270_elliptic}: a Bitcoin transaction graph where nodes are transactions and edges represent money flow.
    
    \item \textbf{Photo}~\cite{mcauley2015image_photo}: an Amazon co-purchase graph with products as nodes and textual features from user reviews.
    
    \item \textbf{T-Finance}~\cite{tang2022rethinking}: a financial transaction network where nodes represent accounts; fraudulent activities are labeled as anomalies.
    
    \item \textbf{DGraph-Fin}~\cite{huang2022dgraph}: a large-scale financial graph linking user accounts via emergency-contact relations, with profile-based node features.
    
    \item \textbf{CS}~\cite{shchur2018pitfalls}: a co-authorship network from the Microsoft Academic Graph.
\end{itemize}

\section{Details of Implementation}
\label{appendix: implementation}

\noindent\textbf{Baseline Implementation.}  
All methods, including \method and baselines, are trained on the source dataset $\mathcal{T}_{\text{train}}$ with full anomaly labels and evaluated on each target dataset $\mathcal{T}_{\text{test}}$ without retraining.  
Node features are projected into a $d_u = 64$ latent space using PCA.  Baseline results are reproduced from official implementations with tuned hyperparameters.  All experiments share a unified training and evaluation pipeline for fair comparison.

\noindent\textbf{Metrics.}  
Following~\cite{tang2024gadbench,arc_liu2024arc}, we use two standard metrics: AUROC and AUPRC, where higher values indicate better performance.  
All results are averaged over five runs with standard deviations reported.

\noindent\textbf{Implementation Details.} 
The experiments in this study were conducted on a Linux server running Ubuntu 20.04. The server was equipped with a 13th Gen Intel(R) Core(TM) i7-12700 CPU, 64GB of RAM, and an NVIDIA GeForce RTX A6000 GPU (48GB memory). For software, we used Anaconda3 to manage the Python environment and PyCharm as the development IDE. The specific software versions were Python 3.8.14, CUDA 11.7, DGL 0.9.1~\cite{wang2019deep} and PyTorch 2.0.1~\cite{paszke2019pytorch}.

\section{Algorithmic description}
\label{appendix: algo_des}

The algorithmic description of the training and inference process of \method is summarized in Algorithm.~\ref{alg: train_alg}, and Algorithm.~\ref{alg: infer_alg}, respectively.

\begin{algorithm}[t]
\caption{The Training algorithm of \method}
\label{alg: train_alg}
\renewcommand{\algorithmicrequire}{\textbf{Input:}}
\renewcommand{\algorithmicensure}{\textbf{Parameters:}}
\begin{algorithmic}[1]
\Require Training datasets $\mathcal{T}_{train}$. 
\Ensure Number of epoch $T$;
\State Generate initial parameters for all learnable parameters. 


\For{\text{\rm each epoch} $t=0,1,2,...,T$}
    \For{$\mathcal{D} \in \mathcal{T}_{train}$}
        \State Get $\mathbf{H}^{[1]},\cdots,\mathbf{H}^{[l]} \gets$ high-order embedding via Eq.~(\ref{eq: X_prop_trans})
        
        \State Get $\mathbf{R} \gets$ calculate residual of $\mathbf{H}^{[l]}$ and $\mathbf{H}^{[1]}$ via Eq.~(\ref{eq: residual_information})
        
        \State Get $\bar{\mathbf{H}}$ and $\hat{\mathbf{H}} \gets$ affinity encoder via Eq.~(\ref{eq: gcn_structure_information}) and Eq.~(\ref{eq: mlp_structure_information})
        
        \State Calculate loss $\mathcal{L}_{High}$  and $\mathcal{L}_{Low}$ via Eq.~(\ref{eq: loss_high}) and Eq.~(\ref{eq: loss_low})
        \State optimization model parameters via Eq.~(\ref{eq: loss_final}).
    \EndFor 
\EndFor
\end{algorithmic}
\end{algorithm}

\begin{algorithm}[!h]
\caption{The Inference algorithm of \method}
\label{alg: infer_alg}
\renewcommand{\algorithmicrequire}{\textbf{Input:}}
\renewcommand{\algorithmicensure}{\textbf{Parameters:}}
\begin{algorithmic}[1]
\Require Test datasets $\mathcal{T}_{test}$. 
\Ensure Pre-trained \method model;
\For{$\mathcal{D} \in \mathcal{T}_{test}$}

    \State Get $\mathbf{H}^{[1]},\cdots,\mathbf{H}^{[l]} \gets$ high-order embedding via Eq.~(\ref{eq: X_prop_trans})

    \State Get $\mathbf{R} \gets$ calculate residual of $\mathbf{H}^{[l]}$ and $\mathbf{H}^{[1]}$ via Eq.~(\ref{eq: residual_information})

    \State Get $\mathcal{RS}(v_i) \gets$ high-order anomaly scoring via Eq.~(\ref{eq: RS_score}).

    \State Get $\bar{\mathbf{H}}$ and $\hat{\mathbf{H}} \gets$ affinity encoder via Eq.~(\ref{eq: gcn_structure_information}) and Eq.~(\ref{eq: mlp_structure_information})
     
    \State Get $\mathcal{AS}(v_i) \gets$ low-order anomaly scoring via Eq.~(\ref{eq: affinity score}).

    \State Get $\mathcal{S}_{\mathcal{AD}}(v_i) \gets$ anomaly disassortativity  adapter via Eq.~(\ref{eq: adas_weight}) - Eq.~(\ref{eq: fused_score}).
    
    \State Get $\mathbf{\hat{Y}} \gets$  pseudo-label voter via Eq.~(\ref{eq: pseudo_label}) - Eq.~(\ref{eq: voting}).
    
    \State Calculate final anomaly  score $\mathcal{S}(v_i) \gets$ via Eq.~(\ref{eq: adaptive_score}) - Eq.~(\ref{eq: adaptive loss}). 
\EndFor
\end{algorithmic}
\end{algorithm}

\begin{table*}[!htbp]
\renewcommand{\arraystretch}{1.2}
\setlength{\extrarowheight}{3pt}  
\centering
\caption{Anomaly detection performance in terms of AUPRC (in percent, mean±std). Highlighted are the results ranked \textcolor{firstcolor}{\textbf{\underline{first}}}, \textcolor{secondcolor}{\underline{second}}, and \textcolor{thirdcolor}{\underline{third}}.} 
\label{tab: main_aurpc}
\resizebox{\textwidth}{!}{%
\begin{tabular}{c|ccccccccccccc|l}
\hline
\textbf{Method}  & \textbf{cs} & \textbf{Facebook} & \textbf{ACM} & \textbf{Cora} & \textbf{T-Finance} & \textbf{CiteSeer} & \textbf{BlogCatalog} & \textbf{elliptic} & \textbf{Amazon} & \textbf{DGraph-Fin} & \textbf{photo} & \textbf{Reddit} & \textbf{Weibo} & \multicolumn{1}{c}{\textbf{rank}} \\ \hline
\rowcolor{Gray} \multicolumn{15}{c}{\textbf{GAD methods}} \\
GCN(2017)        & 2.88±0.31 & 1.59±0.11 & 5.27±1.12 & 7.41±1.55 & 7.81±5.33 & 6.40±1.40 & 7.44±1.07 & \third{15.25±17.70} & 6.96±2.04 & 0.95±1.81 & 4.82±0.29 & 3.39±0.39 & \third{67.21±15.20} & 12.38 \\
GAT(2018)        & 4.33±0.10 & 3.14±0.37 & 4.70±0.75 & 6.49±0.84 & 4.93±2.81 & 5.58±0.62 & 12.81±2.08 & 6.17±0.26 & 15.74±17.85 & OOM & 6.04±0.04 & 3.73±0.54 & 33.34±9.80 & 11.58 \\
BGNN(2021)       & 2.34±0.02 & 3.81±2.12 & 3.48±1.33 & 4.90±1.27 & 3.56±0.09 & 3.91±1.01 & 5.73±1.47 & 10.78±1.01 & 7.51±0.58 & OOM & 4.74±0.01 & 3.52±0.50 & 30.26±29.98 & 14.58 \\
BWGNN(2022)      & 5.44±2.97 & 2.54±0.63 & 7.14±0.20 & 7.25±0.80 & 11.31±2.90 & 6.35±0.73 & 8.99±1.12 & 13.79±3.65 & 13.12±11.82 & 1.31±0.27 & 7.41±2.12 & 3.69±0.81 & 12.13±0.71 & 10.62 \\
GHRN(2023)       & 15.40±1.74 & 2.41±0.62 & 5.61±0.71 & 9.56±2.40 & 9.36±5.61 & 7.79±2.01 & 10.94±2.56 & 11.37±1.26 & 7.54±2.01 & 1.33±0.05 & 6.86±1.24 & 3.24±0.33 & 28.53±7.38 & 9.92 \\
DOMINANT(2019)   & 9.80±0.14 & 2.95±0.06 & 15.59±2.69 & \third{12.75±0.71} & 13.71±0.58 & 13.85±2.34 & \third{35.22±0.87} & 5.97±0.79 & 6.11±0.29 & OOM & 7.43±0.74 & 3.49±0.44 & \first{81.47±0.22} & 7.67 \\
CoLA(2021)       & 14.13±0.89 & 1.90±0.68 & 7.31±1.45 & 11.41±3.51 & 9.42±1.48 & 8.33±3.73 & 6.04±0.56 & 9.36±6.46 & 11.06±4.45 & OOM & 4.42±2.83 & 3.71±0.67 & 7.59±3.26 & 11.42 \\
HCM-A(2022)      & 6.00±2.21 & 2.08±0.60 & 4.01±0.61 & 5.78±0.76 & OOM & 4.18±0.75 & 6.89±0.34 & OOM & 5.87±0.07 & OOM & 6.80±0.42 & 3.18±0.23 & 21.91±11.78 & 15 \\
TAM(2023)        & 17.34±1.56 & \third{8.40±0.97} & 23.20±2.36 & 11.18±0.75 & 4.82±5.61 & 11.55±0.44 & 10.57±1.17 & 6.13±1.06 & 10.75±3.10 & OOM & 9.97±2.47 & 3.94±0.13 & 16.46±0.09 & 8.25 \\
CAGAD(2024)      & 6.35±3.43 & 2.61±0.76 & 7.97±4.67 & 5.31±3.20 & 13.57±6.55 & 3.85±1.60 & 6.40±3.06 & \second{16.24±3.75} & 3.49±0.73 & \third{1.35±0.12} & 7.21±0.56 & \first{13.56±18.91} & 20.95±18.34 & 10.42 \\
S-GAD(2024)       & 17.62±1.47 & 1.82±0.41 & 6.71±1.87 & 5.07±2.10 & 4.40±3.43 & \third{34.34±0.46} & 31.82±6.85 & 6.65±2.77 & 5.43±4.62 & OOM & 3.92±1.42 & 4.01±0.58 & 62.85±7.55 & 11.17 \\
CONSISGAD(2024)  & 9.04±1.51 & 2.72±1.65 & 13.42±3.77 & 6.69±1.24 & 5.00±1.34 & 7.19±2.34 & 15.83±4.00 & 11.59±6.47 & \third{30.15±7.30} & 1.34±0.19 & 6.36±1.70 & 3.57±0.25 & 14.53±7.99 & 9.31 \\
SmoothGNN(2025)  & 7.66±4.59 & 3.76±2.38 & 10.69±4.07 & 7.55±2.17 & 13.63±14.53 & 5.30±1.77 & 16.52±5.64 & 11.53±3.56 & 26.07±16.29 & \second{1.42±0.32} & \third{10.61±12.29} & 3.44±0.23 & 17.57±13.31 & 8.15 \\
SpaceGNN(2025)   & 4.52±0.51 & 1.65±0.26 & 7.47±1.19 & 4.78±1.12 & OOM & 4.03±0.35 & 8.54±1.29 & 11.98±1.62 & 5.76±1.38 & 1.27±0.18 & 4.15±0.56 & 3.89±0.55 & 17.98±11.19 & 13.58 \\
GCTAM(2025)      & 15.22±1.04 & \second{9.61±0.96} & \second{48.09±0.28} & 9.52±0.69 & 4.17±1.59 & 10.29±0.28 & 27.47±0.54 & 6.10±0.13 & 13.71±0.11 & OOM & 8.74±0.22 & 4.34±0.17 & 16.87±1.31 & \third{7.67} \\ \hline
\rowcolor{Gray} \multicolumn{15}{c}{\textbf{GGAD methods}} \\
ARC(2024)        & \second{36.45±0.24} & 8.38±2.39 & \third{40.62±0.10} & \second{49.33±1.64} & \second{20.70±3.63} & \second{45.77±1.25} & \first{36.06±0.18} & 6.75±0.05 & \second{44.25±7.41} & 1.18±0.09 & \first{28.93±1.89} & 4.48±0.28 & 64.18±0.55 & \second{3.69} \\
UNPrompt(2025)  & \third{19.47±0.86} & 2.61±0.45 & 10.45±1.55 & 6.02±0.20 & 1.33±0.17 & 4.47±0.32 & 24.89±3.25 & 6.79±3.81 & 10.27±7.04 & 1.23±0.52 & 6.33±1.17 & \second{5.15±0.65} & 18.67±4.33 & 10.12 \\
AnomalyGFM(2025) & 6.21±0.28 & 5.48±3.43 & 5.41±0.69 & 8.65±1.06 & \third{16.72±1.98} & 7.11±0.97 & 10.44±3.38 & OOM & 11.02±5.59 & OOM & 9.46±0.75 & \third{5.03±0.39} & 37.15±16.41 & 8.09 \\ \hline
\method          & \first{65.80±0.86} & \first{17.53±4.42} & \first{49.86±0.76} & \first{57.06±2.54} & \first{25.42±0.48} & \first{58.26±1.91} & \second{35.84±0.19} & \first{18.83±0.22} & \first{49.15±3.19} & \first{1.48±0.06} & \second{25.22±2.67} & 4.05±0.42 & \second{70.18±0.26} & \first{1.62} \\ \hline
\end{tabular}%
}
\end{table*}

\section{More Experiments: Performance Comparison of AUPRC}
\label{appendix: auprc_result}

Table~\ref{tab: main_aurpc} reports the anomaly detection performance in terms of AUPRC across thirteen datasets. \method achieves the \textbf{best overall ranking (1.62)}, ranking \textbf{first on seven} datasets (ACM, Facebook, Amazon, Cora, CiteSeer, cs, T-Finance) and \textbf{second on four} (BlogCatalog, Weibo, photo, DGraph-Fin), demonstrating strong adaptability to diverse semantic and structural shifts.  
Compared to the strongest baseline ARC, \method shows notable gains, e.g., \textbf{+9.24\%} on ACM, \textbf{+4.90\%} on Amazon, and \textbf{+5.67\%} on Facebook. While ARC remains competitive on some datasets, \method consistently achieves top performance across conventional and financial graphs. Traditional models (GCN, GAT, BGNN) perform poorly on complex datasets, underscoring their limited generalization capabilities.  
These results validate the effectiveness of \method's fusion of invariant semantic features and structure-aware affinity encoding for robust zero-shot anomaly detection across heterogeneous domains.